\definecolor{cvprblue}{rgb}{0.21,0.49,0.74}
\newtheorem{theorem}{Theorem}
\newtheorem{lemma}[theorem]{Lemma}
\newcommand{\desc}{\texttt{Desc}}
\newcommand{\REAL}{\mathbb{R}}
\newcommand{\softmax}{\texttt{{SoftMax}}}
\newcommand{\Q}{Q^\ell_{\desc(F)}}
\newcommand{\K}{K^\ell_{\desc(F)}}
\newcommand{\V}{V^\ell_{\desc(F)}}
\newcommand{\G}{G^\ell_{\desc(F)}}
\title{See Less, Drive Better: Generalizable End-to-End Autonomous Driving via Foundation Models Stochastic Patch Selection}
\definecolor{authors}{RGB}{50, 50, 180}
\author{
    \textcolor{authors}{Amir Mallak}$^{1}$ \quad
    \textcolor{authors}{Erfan Aasi}$^{2}$ \quad
    \textcolor{authors}{Shiva Sreeram}$^{2}$ \quad
    \textcolor{authors}{Tsun-Hsuan Wang}$^{2}$ \quad
    \textcolor{authors}{Daniela Rus}$^{2}$ \quad
    \textcolor{authors}{Alaa Maalouf}$^{1,2}$  \\
    {\normalsize $^1$\textcolor{magenta}{University of Haifa}} \quad
    {\normalsize $^2$\textcolor{magenta}{CSAIL, MIT}} \\
    {\tt\small Correspondance: mallak002@gmail.com}
}
\begin{document}
\newcommand{\rate}{\textsc{rate}}
\newcommand{\methodName}{Stochastic-Patch-Selection}
\newcommand{\method}{SPS}
\twocolumn[{%
\maketitle

\vspace{-2em}
\begin{center}
\includegraphics[width=1\textwidth]{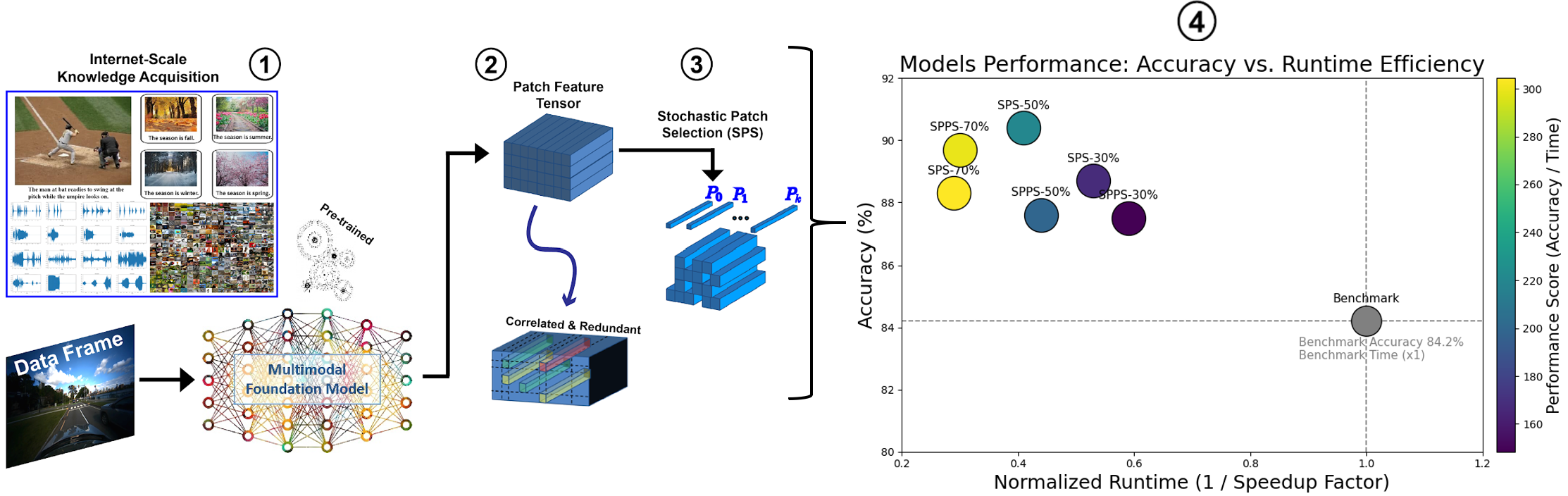}
\captionof{figure}{\textbf{Stochastic Patch Selection (SPS) in a nutshell.} (1) We use large vision-language models to extract patch-level features for input images. (2) These descriptors are often highly redundant and correlated. (3) SPS randomly masks a subset of descriptors, forcing the policy to learn based on different subsets and less correlation, improving efficiency ($\mathbf{2.4\times}$ speedup), generalization ($\mathbf{+6.2\%}$ performance), and enabling plug-and-play integration with downstream policies. In (4), we compare speed vs performance on different variants of \method{} against SOTA.}
\label{fig:teaser}
\end{center}

}]
\vspace{5cm}

\begin{abstract}
Recent advances in end-to-end autonomous driving show that policies trained on patch‑aligned features extracted from foundation models generalize better to Out‑Of‑Distribution (OOD). We hypothesize that due to the self-attention mechanism, each patch feature implicitly embeds/contains information from all other patches, represented in a different way and intensity, making these descriptors highly redundant. 
We quantify redundancy in such (BLIP2) features via PCA and cross-patch similarity: $90\%$ of variance is captured by $17/64$ principal components, and strong inter-token correlations are pervasive. Training on such overlapping information leads the policy to overfit spurious correlations, hurting OOD robustness. 
We present \methodName{} (\method{}), a simple yet effective approach for learning policies that are more robust, generalizable, and efficient. 
For every frame, \method{} randomly masks a fraction of patch descriptors, not feeding them to the policy model, while preserving the spatial layout of the remaining patches.
Thus, the policy is provided with different stochastic but complete views of the (same) scene: every random subset of patches acts like a different, yet still sensible, coherent projection of the world. The policy thus bases its decisions on features that are invariant to which specific tokens survive. 
Extensive experiments confirm that across all OOD scenarios, our method outperforms the state of the art (SOTA), achieving a $6.2\%$ average improvement and up to $20.4\%$ in closed‑loop simulations, while being $2.4 \times$ faster. 
We conduct ablations over masking rates and patch–feature reorganization, training and evaluating 9 systems, with 8 of them surpassing prior SOTA. 
Finally, we show that the same learned policy transfers to a physical, real-world car without any tuning. 
\end{abstract}

\vspace{-0.4cm}

\section{Introduction and background}
\label{sec:intro&background}

As autonomous‑driving technology matures, end‑to‑end approaches emerged as a leading strategy~\cite{pomerleau1988alvinn,bojarski2016end} embodying a transformative design philosophy: a single model handles everything, from perception to control. This approach (i) dispenses with many of the rigid assumptions imposed on individual submodules, and (ii) evaluates and optimizes performance against a single overarching objective, yielding a more coherent and effective system.

\paragraph{From foundation models to driving polices.} Recently, \citet{Wang2024Drive} showed that extracting patch‑aligned (multimodal) features from a foundation model (FM)~\citep{li2023blip,li2022blip,zhang2022dino}, for each input frame, and training a lightweight policy on those features, markedly improves out‑of‑distribution (OOD) generalization. 
To obtain patch features, they applied \emph{masked} attention at a chosen layer (specifically, at the input to BLIP2’s Q-Former), so the mask guides the attention to focus on a single image patch. Repeating this operation for every patch yields a descriptor per patch. 
Since the descriptors come from the output layer that interfaces with the text encoder, they live in a shared vision–language space, enabling text-driven augmentation and robust policy debugging.

\paragraph{Patch-wise features are redundant.}
We posit that patch descriptors extracted from ViT backbones are intrinsically redundant and correlated.
Before introducing the masked‑attention layer, every patch embedding produced by the sub-ViT backbone has already attended to all other patches, as self‑attention mixes information across tokens; each “patch feature” is no longer a purely local descriptor, as it carries a weighted summary of the entire scene, although the information is focused on this patch.
This redundancy is detrimental when we hand the descriptors to the policy head: (1) Correlated inputs inflate the effective dimensionality of the feature space, making it harder for the policy network to disentangle the truly discriminative signals. (2) Overlapping information encourages over‑fitting; the policy may memorize spurious correlations present in the duplicated context instead of learning robust, spatially grounded cues. Finally, (3) unnecessary redundancy wastes computation and memory, both during training and at inference time.

\subsection{Our contributions.} 
We first empirically show that patch-features are redundant in Section~\ref {subsec:redundancy_empirics} via similarity overlays, patch-wise correlations, and PCA explained-variance curves. Building on these insights, we propose \textbf{\methodName{} (\method{})}, a lightweight, plug-and-play patch-masking strategy that improves both the efficiency and OOD generalization of policy learning; See Figure~\ref{fig:teaser}. Our main contributions are as follows.

\begin{enumerate}[leftmargin=*,label=(\roman*)]

\item \textbf{\method{}:} At each frame, \method{} randomly drops a fixed proportion of patch descriptors, forwards the untouched descriptors to the policy network, and crucially leaves their grid positions unchanged. Each unmasked subset forms a coherent yet distinct stochastic view of the scene, encouraging the policy to focus on stable/invariant, causally relevant cues instead of spurious correlations.

\item \textbf{Robustness and efficiency gains over SOTA.}
      Across all OOD scenarios, \method{} improves closed‑loop driving success by an average of $6.2\,\%$, and up to $20.4\,\%$, while being $2.4\times$.

\item \textbf{Comprehensive ablations at scale.}
      We train and evaluate 9 autonomous‑driving systems under varied masking ratios and patch‑reorganization schemes, identifying the design choices that maximize robustness; all variants except one surpassed the previous SOTA.

\item \textbf{Sim‑to‑Real generalization.}
      We deployed \method{} on a full-scale autonomous vehicle, showing a seamless transfer from simulation to a real car without tuning.
\end{enumerate}

Notably, as \method{} operates in a shared vision‑language space, it inherits from~\citep{Wang2024Drive} the support for text‑conditioned perturbations/augmentation in the patches embedding space, yielding an additional $1.7\,\%$ performance boost.

\section{Related work}
\label{sec:related_work}

\paragraph{End-to-end autonomous driving.}
Early work established that neural networks can map raw sensory inputs directly to low-level control~\citep{pomerleau1988alvinn,bojarski2016end}, with subsequent efforts exploring probabilistic objectives and uncertainty-aware control~\citep{amini2019variational}, stability/attention regularization~\citep{wang2023learning}, and safety-aware formulations using control barrier functions~\citep{Xiao2019,xiao2023barriernet}. 
While promising, end-to-end approaches typically demand large, diverse real-world datasets that are costly and risky to collect at scale~\citep{kendall2019learning,amini2022vista}. 
To mitigate this, photorealistic simulators like CARLA, AirSim, and Drake~\citep{dosovitskiy2017carla,shah2018airsim,tedrake2019drake}, and trace-driven interactive simulation such as 
VISTA~\citep{amini2022vista}, have become standard for training and evaluation. 
A complementary line replaces raw pixels with intermediate visual abstractions (affordances, semantics, lane topology) to simplify control~\citep{muller2018driving,toromanoff2020end,behl2020label}. In this space, ``Drive Anywhere''~\citep{Wang2024Drive} introduced a \emph{foundation-model}-based pipeline that extracts rich, multimodal patch-aligned descriptors and feeds them to a lightweight policy head, showing strong OOD generalization. 
Our work builds on this FM-based formulation and aims to patch features \emph{reduce redundancy} at train/inference time to improve OOD robustness without modifying or fine-tuning the FM.

\paragraph{Foundation models in robotics and vision.}
Large pretrained vision language models are increasingly used as general-purpose perception and reasoning backbones, from language-conditioned manipulation and planning~\citep{tellex2020robots,bisk2020experience,ahn2022can,brohan2022rt,li2022pre}, video summarizations~\cite{barbara2025prompts}, and 3D/open-world scene understanding~\citep{huang2023audio,ding2023pla,peng2023openscene} to navigation and instruction following~\citep{chahine2024flexendtoendtextinstructedvisual} and follow-and-detect pipelines~\citep{maalouf2023follow,liu2023grounding,ghiasi2022scaling,li2022language,chahine2025decentralized}. 
Their cross-modal capacity also enables generative interfaces that tie vision and language~\citep{ramesh2021zero,crowson2022vqgan,patashnik2021styleclip,ramesh2022hierarchical}. 
Within autonomous driving, some prior work cautions that the temporal reasoning robustness of off-the-shelf VLMs is limited in settings that require strong temporal understanding~\cite{probing}; therefore, deploying them for driving tasks out of the box is not straightforward. This motivates learning downstream driving policies on top of features extracted from pretrained VLMs, rather than relying on end-to-end VLM reasoning. works increasingly leverage language-grounded or explainable representations/features/descriptors for introspection, learning, and counterfactual analysis~\citep{Kim2019-fw,omeiza2021explanations,kuo2022trajectory,tan2023language,zhong2023language,chahine2024flexendtoendtextinstructedvisual,Wang2024Drive}. We follow this trend in spirit, but use only frozen FMs (BLIP/BLIP-2~\citep{li2022blip,li2023blip}, DINO~\citep{caron2021emerging,oquab2023dinov2}) and intervene solely at the patch-descriptor interface to a small policy head—isolating our stochastic selection from FM training dynamics.

\paragraph{Positioning and novelty. } Prior end-to-end policies~\citep{pomerleau1988alvinn,bojarski2016end,amini2019variational,wang2023learning,Xiao2019,xiao2023barriernet} and FM-based driving frameworks~\citep{Wang2024Drive,chahine2024flexendtoendtextinstructedvisual} have not treated \emph{feature redundancy} in FM patch descriptors as a first-class lever for OOD robustness. Such patch embeddings from pretrained models can be obtained by: (i) \emph{region-first} pipelines that segment the image (e.g., SAM/Mask2Former) and then pool features within each mask by an FM encoder (e.g., CLIP)~\citep{kirillov2023segment,zhao2023fast,cheng2022masked,radford2021learning,zhong2022regionclip,jatavallabhula2023conceptfusion}. These methods inherit sensitivity to segmentation quality and add extra compute stages. (ii) \emph{patch-aligned} extraction directly from the backbone without external masks or fine-tuning~\citep{amir2021deep,Wang2024Drive,chahine2024flexendtoendtextinstructedvisual}, preserving spatial layout at token resolution and avoiding segmentation dependencies. 
Our contribution is orthogonal and complementary: we show that even strong patch-aligned descriptors, \emph{are highly redundant}. We therefore introduce \emph{Stochastic Patch Selection (SPS)}, a simple mechanism to handle such correlations, via a \emph{post-FM} token-selection step that preserves spatial layout, reduces token count, and improves closed-loop OOD performance while reducing latency.
SPS is architecture-agnostic, requires no FM fine-tuning. Evidence from PCA and inter-token-correlation analyses, ablations over masking/reorganization variants, and real-world transfer leads to a simple takeaway: for FM-derived patch features in driving, \emph{less can be more}: carefully injected stochastic sparsity improves generalization without any overhead.

\vspace{-0.1cm}
\section{Method}
\label{sec:method}
We first provide the preliminaries.
\vspace{-0.1cm}

\subsection{Preliminaries}
\label{subsec:prelim}

\paragraph{End-to-end driving as feature-based control.}
We model an autonomous-driving policy as a control function $\phi$ that maps a perception stream $F\!\in\!\REAL^{H\times W\times3}$ of RGB frames to steering and throttle commands $u=\phi(F)$. Rather than operating on raw pixels, we feed $\phi$ a tensor of patch-aligned representations $F'=\desc(F)\!\in\!\REAL^{H'\!\times\!W'\!\times\!D}$ produced by a multimodal foundation model~$\desc$, where $(H',W')$ is the patch grid and $D$ is the channel dimension, i.e.,  $u=\phi\bigl(F'\bigr)$.

We now recall the masked-attention mechanism of \cite{Wang2024Drive}, which enforces spatially selective mixing.

\paragraph{Multimodal patch-wise feature extraction~\citep{Wang2024Drive}.}
Let $\desc$ be a vision transformer of $L$ layers and $N=H'W'$ be the number of non-overlapping patches.
During a forward pass of $\desc$ on $F$, the $\ell$-th self-attention layer yields the  query, key, and value matrices denoted by 
$\Q,\K\in\REAL^{N\times D_k}$ and $\V\in\REAL^{N\times D}$, respectively. To compute the feature of patch $j\!\in\![N]$, \citet{Wang2024Drive} introduced (1) an attention mask \(m^{(j)} \in [0,1]^N\), where \(m^{(j)}_i = 1\) preserves information from patch \(i\) and \(m^{(j)}_i = 0\) excludes it, and  (2) a parameter  \(r < 0\) controling the suppression strength: the larger \(\lvert r \rvert\), the stronger the masking effect. Forming the similarity matrix $\G=\Q{\K}^{\top}$, the attention focus on the information defined by $m^{(j)}$ is: \begin{equation}\label{eq:modified_similarity_aligned}
\begin{aligned}
    \tilde{\G} = & \G + \bigl(\mathbf 1 - M^{(j)}\bigr) r,
\end{aligned}
\end{equation}
 where $M^{(j)} = [m^{(j)},\dots,m^{(j)}]^\top.$ 
This operation drives attention scores in $\tilde{\G}$ for patches with $m_i\approx0$ down to $r$, thus masking them in the following softmax. The term $(\mathbf{1}-M^{(j)})$ adds 0 when the mask is 1 and $r$ when it is near 0. The masked attention weights $\softmax(\tilde{\mathcal{G}})$ yield the desired descriptor through the remaining layers:
\begin{equation}\label{eq:feature_descriptor}
F^{\prime(j)} \;=\;
\desc^{\ell\rightarrow}\!\bigl(\softmax(\tilde{\mathcal{G}})\V\bigr).
\end{equation}
Repeating this procedure for all $j$ reconstructs $F'\in\REAL^{H'\times W'\times D}$.

\paragraph{Mask design.}
For a patch \(j\), we set weight \(m^{(j)}_i=f\!\bigl(d_{ij}\bigr)\) where
\(d_{ij}\) is the distance in the patch grid \(\lVert(x_i,y_i)-(x_j,y_j)\rVert_2\), and $f$ as a hard cutoff \(\mathbf 1[d_{ij}\le\alpha]\), or soft decays \(2^{-d_{ij}}\) and \(1/d_{ij}\), giving flexible locality.

\subsection{Redundancy as low-rank structure}
\label{subsec:pcared}

\begin{figure*}[t]
    \centering
    \includegraphics[width=\linewidth]{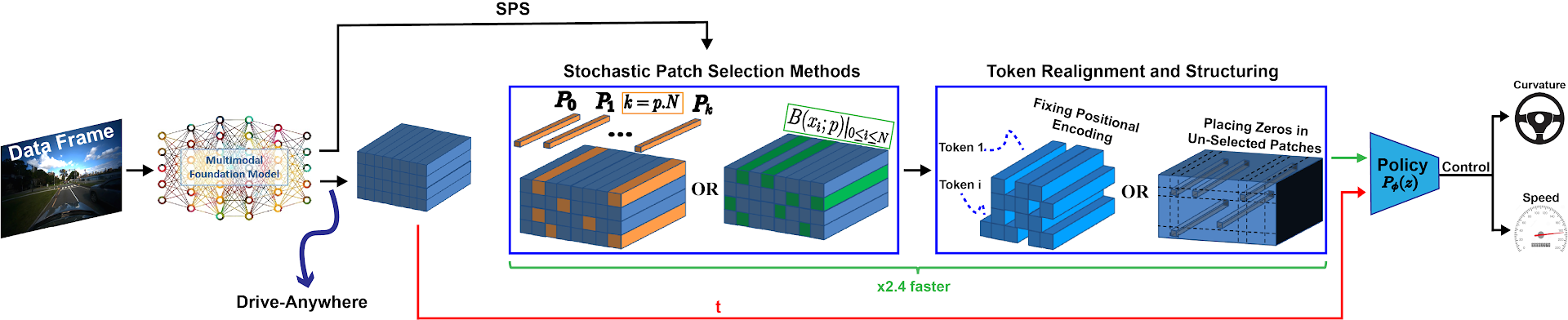}
\caption{\textbf{SPS algorithm vs Drive-Anywhere illustration}. From left to Right. 1st: Input images are processed through a frozen foundation model to produce patch-level descriptors. 2nd: In \textit{Drive-Anywhere}, the full tensor is forwarded unchanged to the policy. 3rd: In our approach, we introduce two patch selection strategies (uniform stochastic and matrix-based probability selection), followed by (4th) a restructuring phase: either masking unselected descriptors or removing them and adjusting positional encodings. Both versions preserve spatial semantics while significantly reducing runtime. \method{} improves efficiency by a factor of $\mathbf{2.4\times}$ while also boosting generalization.}
    \label{fig:SPS_vs_Drive_Anywhere}
\end{figure*}

Let $F_c \in \REAL^{N\times D}$ denote the matrix of patch descriptors obtained by applying $\desc$ to $F$, where $N=H'W'$ is the number of patches and $D$ is the dimension of each descriptor ($F_c$ is the reshaped $F'$ to matrix form).  
Le $\tilde F_c$ be the mean-subtracted matrix, and $\lambda_1 \ge \cdots \ge \lambda_r>0$ be the eigenvalues of the sample covariance $\Sigma=\tfrac{1}{N}\tilde F_c^\top \tilde F_c$, with rank $r\leq d$.   The cumulative explained variance after the first $m\le r$ components is
\begin{align}
\label{em}
E(m):=\frac{\sum_{i=1}^{m}\lambda_i}{\sum_{i=1}^{r}\lambda_i}.
\end{align}

A rapidly saturating $E(m)$ for $m\ll d$ indicates a low-rank structure, hence redundancy among the patch descriptors. 
In Section~\ref{subsec:redundancy_empirics} we empirically show that BLIP2 features reach $90\%$ explained variance with $17$ of $64$ components ($E(17)>0.9$). Even more, when using half of the patches with the highest $\ell_2$ norm, the first $14$ principal components explain $90\%$ of the variance ($E(14)>0.9$).
A proof sketch and implementation details appear in Appx.~\ref{app:pca_corr_details}.

\subsection{Stochastic patch selection (SPS)}\label{sec:sps}
\paragraph{Reducing descriptor redundancy.} As we saw,
information about a scene is implicitly replicated across all tokens. The resulting redundancy
(i)~inflates the computation needed to extract \emph{every} patch feature, 
(ii)~encourages the policy to rely on spurious,
correlated cues that collapse under distribution shift, and
(iii)~ forces the policy to require every tiny detail of the full scene to provide a robust decision. 

Our goal is therefore to \emph{expose the policy}, at every input frame, to a fraction of the patch descriptors and withhold the others, via a token-level stochastic selection mechanism operating in the latent space of the foundation model, while maintaining the spatial layout of the remaining patches. Each random subset forms a different, yet still coherent, projection of the same scene, giving the policy a stream of stochastic but complete views. Consequently, the controller is trained to base its decisions on features that remain invariant to which specific tokens survive, yielding improved OOD robustness and lower compute, all while preserving spatial coherence and encouraging the use of robust, generalizable signals over exact token identity.

Let $N=H'W'$ denote the number of patch positions produced by the
backbone $\desc$ (see Sec.~\ref{subsec:prelim}).
Fix a \emph{sampling rate} $\rate\in(0,1]$.
For every input frame $t$ we:

\begin{enumerate}
\item \textbf{Sample a subset.}\;
      Uniformly at random, select exactly
      $k=\lceil\rate \cdot N\rceil$ patch indices,
      denoted $\Omega_t\subseteq\{1,\dots,N\}$ with $|\Omega_t|=k$.

\item \textbf{Compute descriptors.}\;
      Run the masked–attention extractor
      (Sec.~\ref{subsec:prelim}) \emph{only} for  patches 
      $i\in\Omega_t$,
      obtaining for every $i$, the descriptor $F'_{i}\in\REAL^{D}$.

\item \textbf{Build the sparse tensor.}\;
      Assemble $\tilde F'\in\REAL^{H'\times W'\times D}$:
      \begin{equation}\label{eq:sparse_tensor_build}
      \tilde F'_{i} =
      \begin{cases}
       F'_{i}, & i\in\Omega_t,\\
      \mathbf 0 , & \text{otherwise},
      \end{cases}
      \end{equation}
      thereby \emph{preserving the original spatial layout}
      so the policy head receives a tensor
      of fixed size.
\end{enumerate}

\paragraph{Computational footprint. } The expected fraction of evaluated descriptors is $\rate$, so feature extraction time scales linearly with the keep-rate.
For example, $\rate=0.4$ cuts the forward pass of the foundation model ViT\(_\text{L}\) backbone by \(\sim\!2.5\times\) without
altering the policy architecture. The variant which fix the positional embedding at the VIT, also improve the runtime of the policy.

\paragraph{Overall. } \method{} turns a dense, patch-aligned representation into a \emph{stochastically sparse}, yet spatially coherent, token sequence. By training the control policy on these variable but complete
projections of the scene, we (1) reduce compute, (2) discourage reliance on redundant correlations, and (3) drive consistent gains in OOD generalization; see Section~\ref{subsec:OOD&CMG}.

\subsection{Ablations and variants}
\label{sec:ablations}
To understand the design space, we evaluate two additional variants of the above procedure.

\paragraph{(v1) Threshold masking.}
Instead of sampling exactly $k$ indices,
draw an i.i.d.\ vector $R\sim\mathcal U(0,1)^{H'W'}$ and define
\begin{equation}\label{eq:threshold_masking}
\Omega_t=\bigl\{i\mid R_{i}\le\rate\bigr\}.
\end{equation}
Each patch is sampled independently with
probability~$\rate$, so the expected keep-rate is
$\rate$ but the actual count fluctuates.
This adds stochastic diversity at negligible cost.

\paragraph{(v2) Position-adjusted sparse sequence.}
After choosing $\Omega_t$ (either by fixed-count or threshold
masking), \emph{omit} the dropped tokens entirely and feed the policy
only the descriptors that were actually computed.  Each retained
descriptor $F'_{i}$ is augmented with its original positional
embedding $p_{i}\!\in\!\REAL^{D}$ drawn from the ViT’s full
positional-embedding table, so the controller still knows \emph{where}
each token came from.  This yields a variable-length input sequence
that is shorter by a factor of $\rate$ without inserting any zero
vectors.

Figure~\ref{fig:SPS_vs_Drive_Anywhere} illustrates our method alongside its variants.

\subsection{SPS preserves scene semantics}

\paragraph{Subspace preservation under uniform row sampling.} If the variance of $F_c$ concentrates in a few components, the patch descriptors (rows of $F_c$) lie near a low-dimensional subspace. If the row coherence is low, uniformly sampling enough rows preserves this principal subspace with high probability, hence the policy still receives descriptors that span the same semantics. We now show it formally.

\begin{lemma}[SPS preserves the row-space under low rank and bounded coherence]\label{lemma:1}
Let $F_c \in \mathbb{R}^{N\times d}$ be a centered data matrix with $\operatorname{rank}(F_c)=r$, and let its (thin) singular value decomposition be $F_c \;=\; U_r \Sigma_r V_r^\top,$ where $U_r \in \mathbb{R}^{N\times r}$ and $V_r \in \mathbb{R}^{d\times r}$ have orthonormal columns. The orthogonal projector onto the row space of $F_c$ is $\Pi_F \;=\; V_r V_r^\top,$  
and the row-space coherence is defined as $
\mu \;\triangleq\; \frac{N}{r}\, \max_{i\in [N]} \| e_i^\top U_r \|_2^2 \;\in\; [1,\, N/r].$
Let $\mathcal{I}\subset [N]$ be a uniformly random subset of $m$ indices without replacement, and let $F_{\mathcal{I}} \in \mathbb{R}^{m\times d}$ be the corresponding submatrix of $F_c$. Let $\Pi_{F_{\mathcal{I}}}$ denote the orthogonal projector onto the row space of $F_{\mathcal{I}}$. Then there exists a constant $C>0$ such that, for any $\varepsilon,\delta\in(0,1)$, if
\begin{equation}\label{eq:m}
m \;\ge\; C\,\frac{\mu\, r}{\varepsilon^2}\,\log\!\Bigl(\frac{r}{\delta}\Bigr),
\end{equation}

the following holds with probability at least $1-\delta$:
\begin{equation}\label{eq:pi}
\bigl\|\Pi_F - \Pi_{F_{\mathcal{I}}}\bigr\|_2 \;\le\; \varepsilon,
\end{equation}
i.e., the principal $r$-dimensional subspace of $F_c$ is preserved. For the full proof, see section~\ref{app:pca_corr_proof}.

\end{lemma}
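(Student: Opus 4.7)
The plan is to reduce the subspace-preservation claim to a matrix concentration inequality on the sampled rows of the left singular factor $U_r$, and then exploit the coherence hypothesis to control the uniform operator norm of each random summand. First I would write $F_{\mathcal I} = S_{\mathcal I}\,U_r \Sigma_r V_r^\top$, where $S_{\mathcal I}\in\{0,1\}^{m\times N}$ is the row-selector associated with $\mathcal I$. Since $\Sigma_r$ is invertible and $V_r$ has orthonormal columns, the row space of $F_{\mathcal I}$ equals the column span of $V_r\,\Sigma_r\,(S_{\mathcal I}U_r)^\top$; it coincides with the column span of $V_r$ (i.e., the row space of $F_c$) exactly when $S_{\mathcal I}U_r$ has full column rank $r$. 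This reduces the whole claim to a lower bound on the smallest singular value of $S_{\mathcal I}U_r$.

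Next I would study the random quadratic form $A := \tfrac{N}{m}(S_{\mathcal I}U_r)^\top(S_{\mathcal I}U_r) = \tfrac{N}{m}\sum_{i\in\mathcal I}(U_r^\top e_i)(U_r^\top e_i)^\top$. Passing from without- to with-replacement sampling via a standard comparison, $A$ becomes $\tfrac{1}{m}\sum_{t=1}^{m}X_t$ with $X_t = N(U_r^\top e_{i_t})(U_r^\top e_{i_t})^\top$ and $i_t$ i.i.d.\ uniform on $[N]$; since $U_r^\top U_r = I_r$, we have $\mathbb{E}[X_t]=I_r$. The coherence hypothesis gives $\|U_r^\top e_i\|_2^2 \le \mu r / N$ for every $i$, hence the deterministic bound $\|X_t\|_2 \le \mu r$. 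Applying Tropp's matrix Chernoff inequality then yields
\[
\Pr\!\bigl[\|A - I_r\|_2 \ge \varepsilon\bigr]\;\le\;2r\,\exp\!\left(-\tfrac{c\,\varepsilon^2\, m}{\mu r}\right),
\]
for an absolute constant $c>0$, and choosing $m \ge C\mu r \log(r/\delta)/\varepsilon^2$ with $C = 1/c$ bounds the failure probability by $\delta$, matching the sample complexity in~\eqref{eq:m}.

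Finally, on the event $\{\|A - I_r\|_2 \le \varepsilon\}$ one has $\sigma_{\min}(S_{\mathcal I}U_r)^2 \ge (m/N)(1-\varepsilon) > 0$, so $S_{\mathcal I}U_r$ has full column rank; by the reduction above the row space of $F_{\mathcal I}$ then equals that of $F_c$, forcing $\Pi_F = \Pi_{F_{\mathcal I}}$ \emph{exactly}, and~\eqref{eq:pi} follows with room to spare. A Davis--Kahan $\sin\Theta$ argument would extend the conclusion to the approximately low-rank case, absorbing any constants into $C$. The main obstacle is the bookkeeping in the concentration step: verifying that the matrix Chernoff constants reproduce the precise form of~\eqref{eq:m} and that the with/without-replacement passage does not degrade the tail bound beyond a constant absorbed in $C$. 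The coherence hypothesis is exactly what makes the uniform bound $\|X_t\|_2 \le \mu r$ tight, and this is where low patch-descriptor coherence translates into sample efficiency of SPS.
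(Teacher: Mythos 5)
Your proposal is correct and follows essentially the same route as the paper's proof: both reduce the claim to showing $S_{\mathcal I}U_r$ has full column rank via a matrix Chernoff bound on $\tfrac{N}{m}U_r^\top S^\top S U_r$ under the coherence assumption, and both then observe that on this event the row space of $F_{\mathcal I}$ \emph{exactly} equals that of $F_c$, so the projectors coincide and the $\varepsilon$ bound holds trivially. Your write-up is somewhat more explicit about the concentration step (the with-replacement comparison and the uniform bound $\|X_t\|_2\le\mu r$), which the paper delegates to ``standard subspace-embedding results,'' but the argument is the same.
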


\section{Experimental results}\label{sec:exp}

\begin{figure*}
    \centering
    \includegraphics[width=\linewidth]{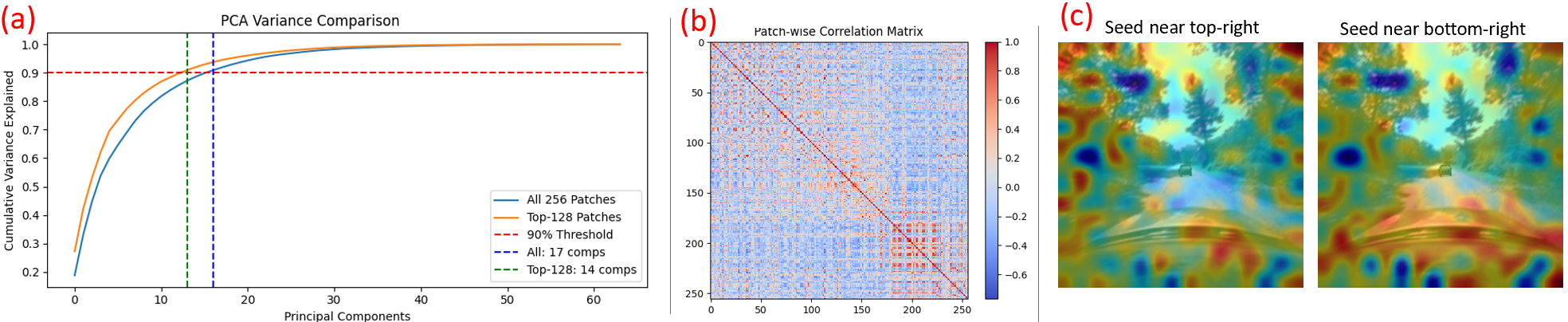}
    \caption{(a) Cumulative explained variance over principal components for all patches versus the top-$128$ patches selected by $\ell_2$ norm. The red line marks $90\%$. Vertical markers indicate $17$ and $14$ components for all and top-$128$, respectively. (b) Patch-wise Pearson correlation matrix for one scenario. Strong off-diagonal correlation indicates widespread cross-patch redundancy. (c)  Cosine-similarity overlays projected onto the image plane. Bright regions indicate patches whose descriptors are highly similar to the seed, visualizing global entanglement from self-attention.}
    \label{fig:panel1}
\end{figure*}
We begin by empirically confirming that patch-wise features are redundant and highly correlated.

\subsection{Redundancy analysis: qualitative and quantitative evidence}
\label{subsec:redundancy_empirics}

\paragraph{PCA explained variance over feature dimensions.}
Let $F \in \REAL^{N \times d}$ be the descriptor matrix. We visualize the cumulative explained variance $E(m)$ (as in~\eqref{em}) as a function of the number of components used $m$. Fig.~\ref{fig:panel1}(a) shows that $90\%$ of the variance is captured by $17$ of $64$ principal components. Repeating the analysis over the $128$ patches with the highest norm, yields $90\%$ with $14$ components, which confirms that redundancy persists even among the strongest tokens. The experiment was conducted on $10000$ frames from different scenes, and averaged across all.

\paragraph{Patch-wise correlation structure.}
We compute the Pearson correlation between all patch descriptors within a frame to obtain an $N \times N$ correlation matrix. A representative heatmap appears in Fig.~\ref{fig:panel1}(b), which shows extensive off-diagonal positive and negative correlation, indicating pervasive redundancy across spatial tokens. 

\paragraph{Similarity overlays on the image plane exampe.}
For a seed patch $i$ with descriptor $f_i \in \REAL^d$, we compute cosine similarities $s_j = \frac{f_i \cdot f_j}{\lVert f_i \rVert \lVert f_j \rVert}$ for all patches $j$, reshape $s \in \REAL^N$ to the $H' \times W'$ grid, upsample to the input resolution, and overlay it on the raw image with a heatmap. High intensity indicates semantic overlap. Examples for two seed locations are shown in Fig.~\ref{fig:panel1}(c). Widespread high similarity far from the seed in multiple locations illustrates global entanglement and redundancy.  See Appx.~\ref{app:sim_overlays} for the full $16 \times 16$ grid of overlays for all patches.

These qualitative and quantitative results support our hypothesis that patch descriptors live in a low-dimensional subspace and exhibit strong cross-patch redundancy, explaining why stochastic sub-sampling maintains scene semantics.

\begin{figure*}[!t]
    \centering
    \includegraphics[width=\linewidth]{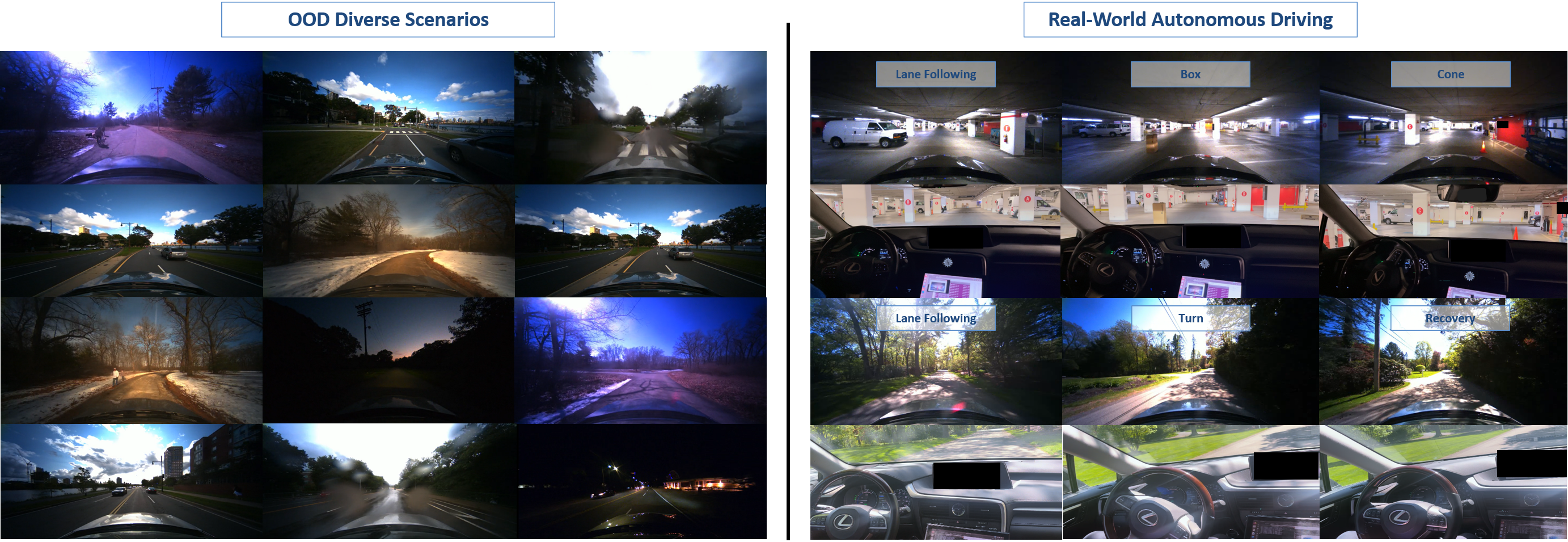}
\caption{Left: Diverse OOD sample frames across varying seasons, weather, and lighting conditions. Right: Real-car deployment representative frames from the rural road (public park) and parking garage, captured from both the onboard camera and external view.}
    \label{fig:AV_OOD}
\end{figure*}

\subsection{SPS Experimental settings}
We conduct extensive closed-loop driving experiments both for in-distribution and OOD environments. Below, we detail the hardware, tasks, evaluation metrics, and training data.

\paragraph{Task definition and evaluation protocol.}
As in~\cite{Wang2024Drive}, we target a general-purpose autonomous driving task requiring the vehicle to follow lanes and avoid obstacles. Failure is defined by three conditions: (i) crossing lane boundaries, (ii) collisions or dangerously close proximity to objects, and (iii) heading deviations exceeding 30$^\circ$ from the lane direction. In simulation, we assess performance using a normalized success duration metric, measuring how long the vehicle drives without triggering any failure, averaged over 100 episodes of roughly 20 seconds each. In real-world testing, performance is quantified by counting safety driver interventions, following the same failure definitions. Evaluations are performed in a closed-loop manner unless otherwise specified.

\paragraph{Training data and learning framework.}
Our training data combines real-world driving logs with diverse simulated experiences generated using VISTA \citep{amini2022vista}, a simulator built upon approximately two hours of real driving data captured across varied environments, lighting, and weather conditions. The learning procedure adopts Guided Policy Learning \citep{levine2013guided,amini2022vista}, leveraging privileged simulator signals to supervise image-based control policies. Control labels are derived using a PID controller for lane keeping and Control Barrier Functions (CBFs) \citep{Xiao2019} for safety-aware obstacle avoidance. 
Policies are trained using the Adam optimizer (learning rate $10^{-3}$), an L2 loss objective, and a plateau scheduler with patience 10 and no decay. 
We utilize BLIP-2 \citep{li2023blip} as the feature extractor to enable a fare comparison with SOTA~\citep{Wang2024Drive}.

\paragraph{Vehicle platform and Hardware.}
The real car experiments were conducted using a fully autonomous 2019 Lexus RX 450H equipped with high-performance computing and sensing hardware. The onboard system includes an NVIDIA RTX 4070 Ti GPU and an AMD Ryzen 7 3800X 8-core CPU. Visual input is captured via a BFS-PGE-23S3C-CS camera running at 30 frames per second, with a 130$^\circ$ horizontal field of view and a resolution of 960×600 pixels.
The models were trained and evaluated on a university-managed cloud cluster using a total of four A100 GPUs (40GB each). Training took approximately four days.

\subsection{OOD generalization}\label{subsec:OOD&CMG}

\begin{table*}[t]
\centering
\caption{\textbf{Benchmarking OOD generalization.} $^\dagger$Indicates car types different from training. \textit{ID} = in-distribution. \textit{OOD} = out-of-distribution.}
\adjustbox{max width=\textwidth}{
\begin{tabular}[t]{@{}c|ccccc|c|c|ccc@{}}  
\toprule
\multirow{3}{*}{\textbf{Setting}} & \multicolumn{5}{c|}{\textbf{Scenarios}} 
& \multicolumn{5}{c}{\textbf{Methods}} \\

& & & & & 
& 
& \multicolumn{1}{c|}{\textbf{Uni-modal FMF}} 
& \multicolumn{3}{c}{\textbf{Multi-modal FMF}} \\

& Scene & Season & Weather & Time & Actor 
&  
\textbf{No-FM} 
& \multicolumn{1}{c|}{\textbf{I-ViT}} 
& \textbf{MF} & \textbf{DA} & \textbf{SPS (Ours)} \\

\midrule
ID & Rural & Summer & Dry & Day & Car & 1.00 & 1.00 & 0.72 & 1.00 & 1.00 \\
\midrule

\multirow{14}{*}{OOD} & \multirow{6}{*}{Rural} & Spring & Dry & Day & Car$^\dagger$ & 0.84 & 0.86 & 0.42 & 0.96 & 0.98 \\
&  & Summer & Dry & Night & Car$^\dagger$ & 0.30 & 0.80 & 0.35 & 0.89 & 0.92 \\
& & Fall & Dry & Day & Car$^\dagger$ & 0.90 & 0.95 & 0.74 & 0.91 & 0.96 \\
&  & Winter & Snow & Day & Car$^\dagger$ & 0.14 & 0.88 & 0.42 & 0.96 & 0.99 \\
& & Spring & Dry & Day & Animal & 0.85 & 0.89 & 0.39 & 0.95 & 0.99 \\
&  & Summer & Dry & Night & Animal & 0.29 & 0.59 & 0.39 & 0.85 & 0.86 \\
& & Fall & Dry & Day & Animal & 0.87 & 0.95 & 0.71 & 0.88 & 0.93 \\
&  & Winter & Snow & Day & Animal & 0.15 & 0.87 & 0.45 & 0.95 & 0.99 \\
\cmidrule{2-6}
& \multirow{6}{*}{Urban} & Summer & Dry & Day & Car$^\dagger$ & 0.55 & 0.77 & 0.50 & 0.62 & 0.82 \\
&  & Summer & Rain & Day & Car$^\dagger$ & 0.69 & 0.81 & 0.43 & 0.81 & 0.83 \\
&  & Summer & Dry & Night & Car$^\dagger$ & 0.45 & 0.81 & 0.42 & 0.78 & 0.87 \\
&  & Summer & Dry & Day & Animal & 0.58 & 0.80 & 0.50 & 0.64 & 0.79 \\
&  & Summer & Rain & Day & Animal & 0.66 & 0.83 & 0.43 & 0.78 & 0.84 \\
&  & Summer & Dry & Night & Animal & 0.45 & 0.86 & 0.36 & 0.81 & 0.88 \\

\midrule
\textbf{Average} &  &  &  &  &  & \textbf{0.55} & \textbf{0.83} & \textbf{0.47} & \textbf{0.84} & \textbf{0.90} \\
\midrule
\textbf{\makecell{Our increase (\%)\\vs. other methods}} &  &  &  &  &  & \textbf{\boldmath{$\uparrow 35\%$}} & \textbf{\boldmath{$\uparrow 7\%$}} & \textbf{\boldmath{$\uparrow 53\%$}} & \textbf{\boldmath{$\uparrow 6\%$}} & \textbf{\textendash} \\
\bottomrule
\end{tabular}
}
\label{tab:ood_generalization_SPS_50}
\end{table*}

To contextualize our improvements, we adapt the OOD generalization experiments and benchmarks reported in the \textit{Drive-Anywhere} \citep{Wang2024Drive} framework.
Specifically, we train the model in rural environments during summer, under dry weather conditions and daytime lighting, with the presence of other vehicles on the road. We then evaluate its performance across diverse scenes, weather conditions, times of day, and in the presence of other dynamic actors.

\paragraph{Baselines: }
(i) No Foundation Model (\textit{No-FM}) \citep{wang2023learning}, \citep{amini2022vista}, a baseline that trains a convolutional model without using foundation models (transformer-based variants performed similarly);  (ii) \textit{Mask-based Features (MF)} \citep{maalouf2023follow,zhao2023fast}, which segment the input image by applying a universal segmentation model~\citep{kirillov2023segment}, extracts a global feature vector for each region by applying a pretrained encoder~\citep{radford2021learning} on a crop bounding this region, and assign that vector uniformly to all pixels within the mask. (iii) \textit{Inherent ViT Features (I-ViT)} \citep{amir2021deep}, which extract per-patch features from the output of intermediate layers of a ViT model~\citep{zhang2022dino}, using the key, query, and value matrices as token-aligned visual descriptors, (iv) the current SOTA Drive Anywhere (DA)~\citep{Wang2024Drive}, which creates per-patch descriptor via the masking strategy explained in Section~\ref{subsec:prelim}, and ours; \textit{SPS} as defined in Section~\ref{sec:sps} without adding any variant from~\ref{sec:ablations}, with a \rate{} = 0.5.

\paragraph{Discussion.} The results are reported in Table~\ref{tab:ood_generalization_SPS_50}. First, \textit{MF} underperforms across both in-distribution and OOD settings, likely due to the applied masking model, which may miss and not segment part of the image~\citep{Dai2015-az}, and thus exclude relevant information from the feature tensor. In contrast, \textit{I-ViT} and \textit{DA} outperform \textit{No-FM}, underscoring the benefit of pretrained representations. However, \textit{I-ViT} remains a uni-modal approach and does not incorporate language grounding. Finally, our proposed method, \textit{SPS}, clearly outperforms all baselines, improving upon the previous SOTA (\textit{DA}) by an average of 6\% and up to 20\% in specific OOD scenarios. Our improvement over each method is reported in the last row of the table. We provide examples in Fig.~\ref{fig:AV_OOD} for representative frames from diverse OOD conditions.

\paragraph{Other variants. } We evaluate additional variants of \textit{SPS} yielding nine different models in total, incorporating different architectural modifications (e.g., different $\rate$, position-adjusted, threshold masking). Figure~\ref{fig:accuracy_vs_selection} shows that 8 out of the 9 models outperform the strongest baseline in OOD settings, and Table~\ref{tab:ood_generalization_all_models} shows detailed results of all variants across each OOD scenario, demonstrating the robustness and flexibility of our approach; see Section~\ref {subsec:run_efficiency_via_SPS} for details.

\paragraph{Cross-backbone generalization (DINO).}
We also applied SPS to a DINO backbone to test transfer beyond BLIP2. For brevity, we report in the appendix OOD scenarios that vary a lot from the in-distribution training set and are therefore especially challenging. On this subset, SPS improves DINO in $6$ of $7$ scenarios and ties in one, with an average absolute gain of $+3.3\%$; see Table~\ref {tab:dino_animal} in the Appendix.

\begin{table*}[t]
\centering
\caption{\textbf{Ablating all of our variants for OOD generalization.} $^\dagger$indicates car types different from training. \textit{ID} is in distribution. \textit{OOD} is out-of-distribution.}
\adjustbox{max width=\textwidth}{
    \begin{tabular}[t]{@{}c|ccccc|cccccccccc@{}}
    \toprule
    \multirow{2}{*}{\textbf{Setting}} & \multicolumn{5}{c}{\textbf{Scenarios}} & \multicolumn{5}{c}{\textbf{Methods}} \\
    & Scene & Season & Weather & Time & Actor & DA & MSPPS-70\% & MSPPS-50\% & MSPPS-30\% & SPPS-70\% & SPPS-50\% & SPPS-30\% & SPS-70\% & SPS-50\% & SPS-30\% \\
    \midrule
    ID & Rural & Summer & Dry & Day & Car & 1.00 & 1.00 & 1.00 & 1.00 & 1.00 & 1.00 & 1.00 & 1.00 & 1.00 & 1.00 \\
    \midrule
    \multirow{14}{*}{OOD} & \multirow{6}{*}{Rural} & Spring & Dry & Day & Car$^\dagger$ & 0.96 & 0.96 &  0.99 & 0.97 & 0.98 & 0.98 & 0.99 & 0.99 & 0.98 & 0.94 \\
    &  & Summer & Dry & Night & Car$^\dagger$ & 0.89 & 0.82 & 0.89 & 0.87 & 0.87 & 0.91 & 0.84 & 0.93 & 0.92 & 0.91 \\
    & & Fall & Dry & Day & Car$^\dagger$ & 0.91 & 0.98 & 0.94 & 0.97 & 0.95 & 0.97 & 0.97 & 0.96 & 0.96 & 0.98 \\
    &  & Winter & Snow & Day & Car$^\dagger$ & 0.96 & 0.95 & 0.98 & 0.99 & 0.99 & 0.96 & 0.99 & 0.96 & 0.99 & 0.98 \\
    & & Spring & Dry & Day & Animal & 0.95 & 0.95 & 0.97 & 0.97 & 0.97 & 0.98 & 0.95 & 0.95 & 0.99 & 0.95 \\
    &  & Summer & Dry & Night & Animal & 0.85 & 0.79 & 0.84 & 0.77 & 0.86 & 0.84 & 0.80 & 0.87 & 0.86 & 0.87 \\
    & & Fall & Dry & Day & Animal & 0.88 & 0.99 & 0.91 & 0.94 & 0.93 & 0.97 & 0.92 & 0.97 & 0.93 & 0.92 \\
    &  & Winter & Snow & Day & Animal & 0.95 & 0.94 & 0.97 & 0.99 & 0.99 & 0.97 & 0.95 & 0.97 & 0.99 & 0.97 \\
    \cmidrule{2-6}
    & \multirow{6}{*}{Urban} & Summer & Dry & Day & Car$^\dagger$ & 0.62 & 0.53 & 0.70 & 0.75 & 0.79 & 0.65 & 0.77 & 0.71 & 0.82 & 0.72 \\
    &  & Summer & Rain & Day & Car$^\dagger$ & 0.81 & 0.65 & 0.85 & 0.91 & 0.83 & 0.84 & 0.82 & 0.82 & 0.83 & 0.86 \\
    &  & Summer & Dry & Night & Car$^\dagger$ & 0.78 & 0.71 & 0.89 & 0.90 & 0.88 & 0.84 & 0.86 & 0.79 & 0.87 & 0.85 \\
    &  & Summer & Dry & Day & Animal & 0.64 & 0.50 & 0.65 & 0.72 & 0.76 & 0.64 & 0.74 & 0.73 & 0.79 & 0.70 \\
    &  & Summer & Rain & Day & Animal & 0.78 & 0.65 & 0.86 & 0.91 & 0.87 & 0.86 & 0.79 & 0.86 & 0.84 & 0.87 \\
    &  & Summer & Dry & Night & Animal & 0.81 & 0.71 & 0.88 & 0.90 & 0.89 & 0.85 & 0.86 & 0.85 & 0.88 & 0.90 \\
    \midrule
\textbf{Average} &  &  &  &  &  & $84.2\%$ & $79.5\%$ & $88\%$ & $89.7\%$ & $89.7\%$ & $87.6\%$ & $87.5\%$ & $88.3\%$ & $90.4\%$ & $88.7\%$ \\

\midrule
\textbf{\makecell{Our increase (\%)\\vs. other methods}} &  &  &  &  &  & \textbf{\textendash} & \textbf{\boldmath{$\downarrow 4.7\%$}} & \textbf{\boldmath{$\uparrow 3.8\%$}} & \textbf{\boldmath{$\uparrow 5.5\%$}} & \textbf{\boldmath{$\uparrow 5.5\%$}} & \textbf{\boldmath{$\uparrow 3.4\%$}} & \textbf{\boldmath{$\uparrow 3.3\%$}} & \textbf{\boldmath{$\uparrow 4.1\%$}} & \textbf{\boldmath{$\uparrow 6.2\%$}} & \textbf{\boldmath{$\uparrow 4.5\%$}} \\
\bottomrule
    \end{tabular}
}

\label{tab:ood_generalization_all_models}
\end{table*}

\subsection{Latent space text-augmented fine-tuning} \label{subsec:text_aug}

We adapt language-guided latent space augmentation to further improve the robustness of our best model, \textit{SPS-50\%}. The augmentation pipeline proceeds as follows:  (i) We prompt an LLM to produce a concise list of driving-relevant textual features; visual concepts present in the training scenes, e.g., \textit{Tree} or \textit{Truck}; these are then identified, and will plausibly be replaced to foster OOD generalization.
(ii) An LLM is queried to generate alternative concepts that are visually plausible and commonly encountered in driving scenarios (e.g., house instead of tree).   
(iii) In selected frames, patch-level descriptors associated with the target concepts are replaced in latent space with those derived from the suggested alternatives using the foundation model’s text encoder. This results in a form of data augmentation that preserves scene coherence while introducing semantic diversity.

\begin{table}[t]
\caption{
\textbf{Language-guided augmentation for OOD generalization.} We fine-tune the pretrained \textit{SPS-50\%} model using language-driven latent feature substitution based on LLM-suggested concepts.}
    \centering    
    \fontsize{9pt}{9pt}\selectfont
    \begin{tabular}{c|c|c|c|c}
        \toprule
        \textbf{RSDDC} & \textbf{RSDNC} & \textbf{RFDDC} & \textbf{RWSDC} & \textbf{RSDDA} \\
        \hline
        \textbf{\textendash} & +0.29\% & +1.85\% & \textbf{\textendash} & \textbf{\textendash} \\
        \hline
        \textbf{RSDNA} & \textbf{RFDDA} & \textbf{RWSDA} & \textbf{USDDC} & \textbf{USRDC} \\
        \hline
         -0.21\% & +2.93 & \textbf{\textendash} & -1.16 & +4.39\% \\
        \hline
        \textbf{USDNC} & \textbf{USDDA} & \textbf{USRDA} & \textbf{USDNA} & \textbf{All} \\
        \hline
        +0.36\% & +3.86\% & +3.28\% & +1.43\% & \footnotesize{\textbf{+1.7\%}} \\
        \bottomrule
    \end{tabular}
    \label{tab:data_aug}
\end{table}

We apply this technique as a lightweight fine-tuning step to the pretrained \textit{SPS-50\%} model. The model is updated for a small number of iterations using augmented latent features.
As shown in Table~\ref{tab:data_aug}, this results in a further +1.7\% accuracy gain across OOD scenarios, excluding cases where the base model already achieves near-perfect performance (\(\geq\)98\%), where meaningful further gains are unlikely. These results show that \textit{SPS} effectively integrates semantic augmentations through an interpretable, text-driven process, enabling scalable generalization via concept-level latent edits.

\subsection{Ablations: efficiency and performance} \label{subsec:run_efficiency_via_SPS}
Beyond improving generalization, \method{} offers substantial computational benefits during both training and inference. By selecting only a subset of patch features, we reduce the overall processing load without sacrificing semantic richness. We explore multiple variants leveraging the \method{} mechanism.

\paragraph{Variants.} In the base \textit{SPS} variant, unselected patch descriptors are replaced with zeros, preserving the feature tensor's original shape. In the more aggressive \textit{SPPS (Structured Patch Pruning with Selection)}, unselected tokens are entirely removed; the selected tokens are then spatially reorganized and assigned position embeddings relative to their original location (Method V2 from Section~\ref{sec:ablations}). 
A third variant, \textit{MSPPS (Matrix-based Structured Patch Pruning)}, stochastically prunes patch tokens based on a per-patch probability mask, yielding dynamic, matrix-based sparsity. Notably, it also prunes unselected patches as in SPPS (the combination of V1 and V2 from section~\ref{sec:ablations}).

\paragraph{Speed reported results.}  Each variant is evaluated across three selection rates: 70\%, 50\%, and 30\%.  Inference timing was measured over 100 independent runs per model component, with averages reported for the QFormer, vision projection and normalization, BLIP2 convolution, and the ViT policy model. While we train and benchmark nine model variants in total, only six representative models are shown in Table~\ref{tab:Runtime_Efficiency_via_SPS}, as the \textit{SPPS} and \textit{MSPPS} variants are functionally similar in terms of runtime, with any differences being negligible in practice. 
The \textit{SPS-50\%} model (our best model in terms of accuracy) achieves a $2.43\times$ speedup over the SOTA \textit{Drive-Anywhere}. 
At the more aggressive 30\% selection rate, \textit{SPPS} and \textit{\method{}} reach $3.31\times$ and $3.47\times$ speedups, respectively.   This shows the core trends: a consistent correlation between lower selection rates and improved runtime, highlighting that \method{} not only improves generalization, but also enables substantial computational gains, making it attractive for real-world autonomous systems deployment.


\paragraph{Speed vs accuracy reported results. } To analyse the trade-off between efficiency and accuracy, in Figure~\ref{fig:teaser}(4) we show a comparison, where each model is represented by a single node: The x-axis reflects runtime cost, defined as the inverse of the model’s speedup over the baseline - $1 / T_{\text{Factor}}$ (lower is better), while the y-axis reports average closed-loop performance.
Node color encodes a composite performance score, computed as accuracy divided by runtime (higher is better), highlighting models that strike the best balance between precision and efficiency.   
The six reported models gain better accuracy while being faster than the drive anywhere model; the improvements of our variant in terms of accuracy vary from $3.3\%$ to $6.2\%$, while the speed improved $1.69\times$ to $3.47\times$.
While our composite performance analysis (Fig.~\ref{fig:teaser}(4)) shows that \textit{SPS-70\%} and \textit{SPPS-70\%} achieve the highest scores when accuracy and runtime are equally weighted, practical deployment considerations suggest a different weighting scheme. 
In the context of real-world autonomous driving, reliability and generalization, particularly under OOD conditions, are of higher criticality than marginal gains in computational speed. The \textit{SPS-50\%} model consistently delivers the highest average accuracy across scenarios, and this performance margin is non-trivial in safety-critical systems. We therefore view \textit{SPS-50\%} as the strongest overall candidate for real-world deployment, offering the best tradeoff between robustness and efficiency.

\paragraph{The effect of \textbf{\rate}. }
As shown in Table~\ref{tab:ood_generalization_all_models}, most variants outperform the \textit{Drive-Anywhere} benchmark across OOD scenarios, with \textit{SPS-50\%} achieving the highest accuracy (+6.2\%), highlighting the general effectiveness of stochastic patch selection.
Figure~\ref{fig:accuracy_vs_selection} plots performance against selection rate for each variant, revealing distinct trends: \textit{SPS} peaks at 50\%, offering the strongest gain; \textit{SPPS} performs best at 70\% and declines with more aggressive pruning; and \textit{MSPPS} improves as selection becomes stricter, peaking at 30\%. These findings confirm that stochastic selection improves generalization across architectures, though the optimal ratio varies. Notably, moderate selection rates strike the best tradeoff between redundancy reduction and information retention in models where spatial structure is retained.

\begin{table*}[t]
  \centering
  \caption{\textbf{Runtime Efficiency of \method{}.} The calculations were averaged across $100$ independent runs.}
  \label{tab:Runtime_Efficiency_via_SPS}
  
  \resizebox{\textwidth}{!}{%
  \begin{tabular}{@{}c|cc|cc|cc|cc|c|c@{}}
  \toprule
  \multirow{2}{*}{\textbf{Model}} 
  & \multicolumn{2}{c|}{\textbf{Qformer}} 
  & \multicolumn{2}{c|}{\textbf{Vision Proj.\&Norm}} 
  & \multicolumn{2}{c|}{\textbf{FM Project}} 
  & \multicolumn{2}{c|}{\textbf{Policy}} 
  & \multicolumn{2}{c}{\textbf{Total Efficiency}} \\
  & Qformer & Efficiency & VP\&N & Efficiency & FMC & Efficiency & Policy & Efficiency & Percentage & Factor \\
  \midrule
  Benchmark & 2.17s & \textendash & 871$\mu s$ & \textendash & 540$\mu s$ & \textendash & 17$ms$ & \textendash & 100$\%$ & x1 \\
  SPS\_$70\%$ & 1.56s & $39.10\%$ & 839$\mu s$ & $3.75\%$ & 376$\mu s$ & $43.67\%$ & 20$ms$ & $0.24\%$ & $86.52\%$ & x1.87 \\
  SPS\_$50\%$ & 1.11s & $95.50\%$ & 855$\mu s$ & $1.88\%$ & 370$\mu s$ & $46.08\%$ & 20$ms$ & $-1.22\%$ & $143.45\%$ & x2.43 \\
  SPS\_$30\%$ & 0.71s & $205.63\%$ & 842$\mu s$ & $3.45\%$ & 390$\mu s$ & $38.41\%$ & 20$ms$ & $0.41\%$ & $247.50\%$ & x3.47 \\
  SPPS\_$70\%$ & 1.56s & $39.10\%$ & 846$\mu s$ & $2.93\%$ & 500$\mu s$ & $8.06\%$ & 14.3$ms$ & $18.92\%$ & $69.02\%$ & x1.69 \\
  SPPS\_$50\%$ & 1.11s & $95.50\%$ & 837$\mu s$ & $4.06\%$ & 493$\mu s$ & $9.41\%$ & 14.5$ms$ & $17.45\%$ & $126.41\%$ & x2.26 \\
  SPPS\_$30\%$ & 0.71s & $205.63\%$ & 841$\mu s$ & $3.58\%$ & 542$\mu s$ & $-0.44\%$ & 13.9$ms$ & $22.43\%$ & $231.21\%$ & x3.31 \\
  \bottomrule
  \end{tabular}
  }
\end{table*}

\begin{figure}
    \centering
    \includegraphics[width=\linewidth]{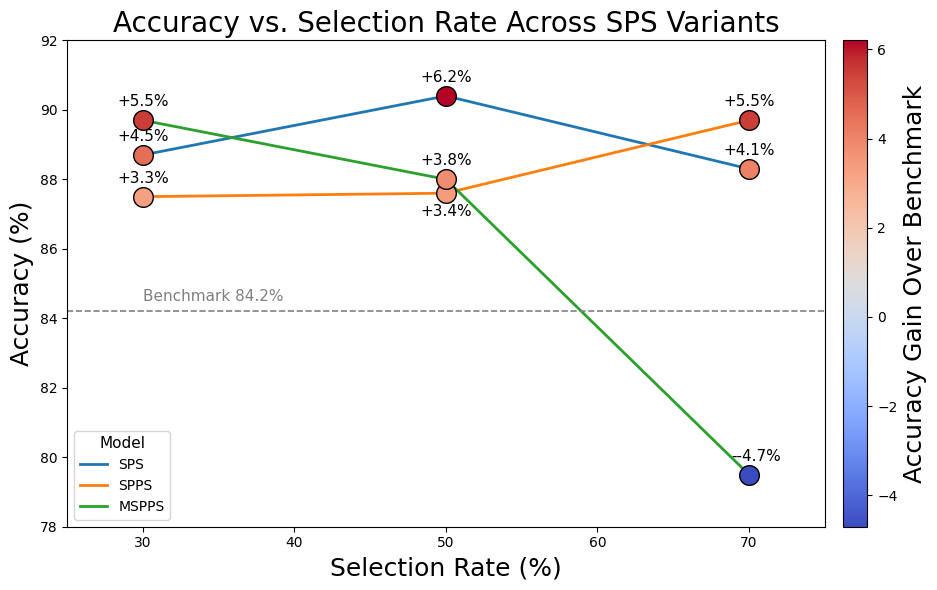}
    \caption{Accuracy as a function of selection rate across variants.}
    \label{fig:accuracy_vs_selection}
\end{figure}

\subsection{Real-world car deployment} \label{subsec:real_world_car_deployment}
We deployed the \textit{SPS-50\%} model on a full-scale autonomous vehicle (see supplementary video). The experiments took place during the spring season at two distinct locations: a rural road within a public park and an underground parking garage. 
In the rural scenario, the vehicle successfully performed lane-following under natural daylight conditions, confirming the model’s ability to transfer from simulation to the real world. The underground scenario was a challenging OOD test case, featuring low-light conditions, the absence of clear lanes, and visually cluttered geometry, such as perpendicular parking lines, parked vehicles, and distractor objects. Despite never encountering such a configuration at training, the vehicle executed the correct motion.
In addition, it successfully avoided static obstacles, including cones and boxes, further demonstrating its robustness. Representative frames from both real-world scenarios, captured from onboard and external views, are shown in Figure~\ref{fig:AV_OOD}. 

\section{Conclusion and future work}
\label{sec:conclution_and_future_work}

This work shows that patch-aligned features (extracted from vision–language models) used for training driving policies are correlated and redundant. Motivated by this, it introduced \method{}: a stochastic patch–masking strategy that tackles the redundancy inevitably built into such features. By randomly suppressing a fraction of patch descriptors while preserving their spatial arrangement, \method{} forces the policy to ground its decisions on features that are robust to which tokens survive.  Across a diverse suite of closed-loop driving scenarios, covering weather, lighting and geographic domain shifts, \method{} delivers a \textbf{6.2\%} average gain in OOD success rate, peaks at \textbf{20.4\%} improvement in the hardest scenarios, and speeds inference by \textbf{2.4$\times$} compared with the previous SOTA.  
The same mechanism enables text-conditioned data augmentation in the latent patch space, yielding more \textbf{1.7\%} boost without extra image synthesis. Ablations confirm that the benefit is robust to (reasonable) selection rates, selection methods, and feature re-ordering schemes: 8 out of the 9 trained systems surpassed the prior SOTA.  We demonstrated that policies trained with \method{} in simulation transfer to a real-world autonomous vehicle with \emph{no} extra fine-tuning, highlighting the practical relevance of the approach.

\noindent\textbf{Future work} includes sophisticated selection techniques (i) learning a state-dependent sampling policy that select the number of patches to allocate based on scenes, and (ii) going beyond uniform sampling by \emph{inspecting} the descriptors themselves, e.g., coreset selection, attention entropy, or mutual-information scores, to drop provably redundant patches, further sharpening both efficiency and robustness.

\section*{Acknowledgments}
Alaa Maalouf acknowledges support from the Neubauer Family Foundation and from the MAOF Fellowship of the Council for Higher Education. This work is supported by Toyota Research Institute (TRI) and Capgemini Engineering. It, however, reflects solely the opinions and conclusions of its authors and not TRI or any other Toyota entity.

{
    \small
    \bibliographystyle{IEEEtranN}
    \bibliography{main}

\begin{thebibliography}{54}
\providecommand{\natexlab}[1]{#1}
\providecommand{\url}[1]{#1}
\csname url@samestyle\endcsname
\providecommand{\newblock}{\relax}
\providecommand{\bibinfo}[2]{#2}
\providecommand{\BIBentrySTDinterwordspacing}{\spaceskip=0pt\relax}
\providecommand{\BIBentryALTinterwordstretchfactor}{4}
\providecommand{\BIBentryALTinterwordspacing}{\spaceskip=\fontdimen2\font plus
\BIBentryALTinterwordstretchfactor\fontdimen3\font minus \fontdimen4\font\relax}
\providecommand{\BIBforeignlanguage}[2]{{%
\expandafter\ifx\csname l@#1\endcsname\relax
\typeout{** WARNING: IEEEtranN.bst: No hyphenation pattern has been}%
\typeout{** loaded for the language `#1'. Using the pattern for}%
\typeout{** the default language instead.}%
\else
\language=\csname l@#1\endcsname
\fi
#2}}
\providecommand{\BIBdecl}{\relax}
\BIBdecl

\bibitem[Pomerleau(1988)]{pomerleau1988alvinn}
D.~A. Pomerleau, ``Alvinn: An autonomous land vehicle in a neural network,'' \emph{Advances in neural information processing systems}, vol.~1, 1988.

\bibitem[Bojarski et~al.(2016)Bojarski, Del~Testa, Dworakowski, Firner, Flepp, Goyal, Jackel, Monfort, Muller, Zhang, et~al.]{bojarski2016end}
M.~Bojarski, D.~Del~Testa, D.~Dworakowski, B.~Firner, B.~Flepp, P.~Goyal, L.~D. Jackel, M.~Monfort, U.~Muller, J.~Zhang \emph{et~al.}, ``End to end learning for self-driving cars,'' \emph{arXiv preprint arXiv:1604.07316}, 2016.

\bibitem[Wang et~al.(2024)Wang, Maalouf, Xiao, Ban, Amini, Rosman, Karaman, and Rus]{Wang2024Drive}
T.-H. Wang, A.~Maalouf, W.~Xiao, Y.~Ban, A.~Amini, G.~Rosman, S.~Karaman, and D.~Rus, ``Drive anywhere: Generalizable end-to-end autonomous driving with multi-modal foundation models,'' in \emph{2024 IEEE International Conference on Robotics and Automation (ICRA)}, 2024, pp. 6687--6694.

\bibitem[Li et~al.(2023)Li, Li, Savarese, and Hoi]{li2023blip}
J.~Li, D.~Li, S.~Savarese, and S.~Hoi, ``Blip-2: Bootstrapping language-image pre-training with frozen image encoders and large language models,'' \emph{arXiv preprint arXiv:2301.12597}, 2023.

\bibitem[Li et~al.(2022{\natexlab{a}})Li, Li, Xiong, and Hoi]{li2022blip}
J.~Li, D.~Li, C.~Xiong, and S.~Hoi, ``Blip: Bootstrapping language-image pre-training for unified vision-language understanding and generation,'' in \emph{International Conference on Machine Learning}.\hskip 1em plus 0.5em minus 0.4em\relax PMLR, 2022, pp. 12\,888--12\,900.

\bibitem[Zhang et~al.(2022)Zhang, Li, Liu, Zhang, Su, Zhu, Ni, and Shum]{zhang2022dino}
H.~Zhang, F.~Li, S.~Liu, L.~Zhang, H.~Su, J.~Zhu, L.~M. Ni, and H.-Y. Shum, ``Dino: Detr with improved denoising anchor boxes for end-to-end object detection,'' 2022.

\bibitem[Amini et~al.(2019)Amini, Rosman, Karaman, and Rus]{amini2019variational}
A.~Amini, G.~Rosman, S.~Karaman, and D.~Rus, ``Variational end-to-end navigation and localization,'' in \emph{2019 International Conference on Robotics and Automation (ICRA)}.\hskip 1em plus 0.5em minus 0.4em\relax IEEE, 2019, pp. 8958--8964.

\bibitem[Wang et~al.(2023)Wang, Xiao, Chahine, Amini, Hasani, and Rus]{wang2023learning}
T.-H. Wang, W.~Xiao, M.~Chahine, A.~Amini, R.~Hasani, and D.~Rus, ``Learning stability attention in vision-based end-to-end driving policies,'' in \emph{Learning for Dynamics and Control Conference}.\hskip 1em plus 0.5em minus 0.4em\relax PMLR, 2023, pp. 1099--1111.

\bibitem[Xiao and Belta(2019)]{Xiao2019}
W.~Xiao and C.~Belta, ``Control barrier functions for systems with high relative degree,'' in \emph{Proc. of 58th IEEE Conference on Decision and Control}, Nice, France, 2019, pp. 474--479.

\bibitem[Xiao et~al.(2023)Xiao, Wang, Hasani, Chahine, Amini, Li, and Rus]{xiao2023barriernet}
W.~Xiao, T.-H. Wang, R.~Hasani, M.~Chahine, A.~Amini, X.~Li, and D.~Rus, ``Barriernet: Differentiable control barrier functions for learning of safe robot control,'' \emph{IEEE Transactions on Robotics}, 2023.

\bibitem[Kendall et~al.(2019)Kendall, Hawke, Janz, Mazur, Reda, Allen, Lam, Bewley, and Shah]{kendall2019learning}
A.~Kendall, J.~Hawke, D.~Janz, P.~Mazur, D.~Reda, J.-M. Allen, V.-D. Lam, A.~Bewley, and A.~Shah, ``Learning to drive in a day,'' in \emph{2019 International Conference on Robotics and Automation (ICRA)}.\hskip 1em plus 0.5em minus 0.4em\relax IEEE, 2019, pp. 8248--8254.

\bibitem[Amini et~al.(2022)Amini, Wang, Gilitschenski, Schwarting, Liu, Han, Karaman, and Rus]{amini2022vista}
A.~Amini, T.-H. Wang, I.~Gilitschenski, W.~Schwarting, Z.~Liu, S.~Han, S.~Karaman, and D.~Rus, ``Vista 2.0: An open, data-driven simulator for multimodal sensing and policy learning for autonomous vehicles,'' in \emph{2022 International Conference on Robotics and Automation (ICRA)}.\hskip 1em plus 0.5em minus 0.4em\relax IEEE, 2022, pp. 2419--2426.

\bibitem[Dosovitskiy et~al.(2017)Dosovitskiy, Ros, Codevilla, Lopez, and Koltun]{dosovitskiy2017carla}
A.~Dosovitskiy, G.~Ros, F.~Codevilla, A.~Lopez, and V.~Koltun, ``Carla: An open urban driving simulator,'' in \emph{Conference on robot learning}.\hskip 1em plus 0.5em minus 0.4em\relax PMLR, 2017, pp. 1--16.

\bibitem[Shah et~al.(2018)Shah, Dey, Lovett, and Kapoor]{shah2018airsim}
S.~Shah, D.~Dey, C.~Lovett, and A.~Kapoor, ``Airsim: High-fidelity visual and physical simulation for autonomous vehicles,'' in \emph{Field and Service Robotics: Results of the 11th International Conference}.\hskip 1em plus 0.5em minus 0.4em\relax Springer, 2018, pp. 621--635.

\bibitem[Tedrake et~al.(2019)]{tedrake2019drake}
R.~Tedrake \emph{et~al.}, ``Drake: Model-based design and verification for robotics,'' 2019, software Package, available at \url{https://drake.mit.edu/}.

\bibitem[M{\"u}ller et~al.(2018)M{\"u}ller, Dosovitskiy, Ghanem, and Koltun]{muller2018driving}
M.~M{\"u}ller, A.~Dosovitskiy, B.~Ghanem, and V.~Koltun, ``Driving policy transfer via modularity and abstraction,'' \emph{arXiv preprint arXiv:1804.09364}, 2018.

\bibitem[Toromanoff et~al.(2020)Toromanoff, Wirbel, and Moutarde]{toromanoff2020end}
M.~Toromanoff, E.~Wirbel, and F.~Moutarde, ``End-to-end model-free reinforcement learning for urban driving using implicit affordances,'' in \emph{Proceedings of the IEEE/CVF conference on computer vision and pattern recognition}, 2020, pp. 7153--7162.

\bibitem[Behl et~al.(2020)Behl, Chitta, Prakash, Ohn-Bar, and Geiger]{behl2020label}
A.~Behl, K.~Chitta, A.~Prakash, E.~Ohn-Bar, and A.~Geiger, ``Label efficient visual abstractions for autonomous driving,'' in \emph{2020 IEEE/RSJ International Conference on Intelligent Robots and Systems (IROS)}.\hskip 1em plus 0.5em minus 0.4em\relax IEEE, 2020, pp. 2338--2345.

\bibitem[Tellex et~al.(2020)Tellex, Gopalan, Kress-Gazit, and Matuszek]{tellex2020robots}
S.~Tellex, N.~Gopalan, H.~Kress-Gazit, and C.~Matuszek, ``Robots that use language,'' \emph{Annual Review of Control, Robotics, and Autonomous Systems}, vol.~3, pp. 25--55, 2020.

\bibitem[Bisk et~al.(2020)Bisk, Holtzman, Thomason, Andreas, Bengio, Chai, Lapata, Lazaridou, May, Nisnevich, et~al.]{bisk2020experience}
Y.~Bisk, A.~Holtzman, J.~Thomason, J.~Andreas, Y.~Bengio, J.~Chai, M.~Lapata, A.~Lazaridou, J.~May, A.~Nisnevich \emph{et~al.}, ``Experience grounds language,'' \emph{arXiv preprint arXiv:2004.10151}, 2020.

\bibitem[Ahn et~al.(2022)Ahn, Brohan, Brown, Chebotar, Cortes, David, Finn, Fu, Gopalakrishnan, Hausman, et~al.]{ahn2022can}
M.~Ahn, A.~Brohan, N.~Brown, Y.~Chebotar, O.~Cortes, B.~David, C.~Finn, C.~Fu, K.~Gopalakrishnan, K.~Hausman \emph{et~al.}, ``Do as i can, not as i say: Grounding language in robotic affordances,'' \emph{arXiv preprint arXiv:2204.01691}, 2022.

\bibitem[Brohan et~al.(2022)Brohan, Brown, Carbajal, Chebotar, Dabis, Finn, Gopalakrishnan, Hausman, Herzog, Hsu, et~al.]{brohan2022rt}
A.~Brohan, N.~Brown, J.~Carbajal, Y.~Chebotar, J.~Dabis, C.~Finn, K.~Gopalakrishnan, K.~Hausman, A.~Herzog, J.~Hsu \emph{et~al.}, ``Rt-1: Robotics transformer for real-world control at scale,'' \emph{arXiv preprint arXiv:2212.06817}, 2022.

\bibitem[Li et~al.(2022{\natexlab{b}})Li, Puig, Paxton, Du, Wang, Fan, Chen, Huang, Aky{\"u}rek, Anandkumar, et~al.]{li2022pre}
S.~Li, X.~Puig, C.~Paxton, Y.~Du, C.~Wang, L.~Fan, T.~Chen, D.-A. Huang, E.~Aky{\"u}rek, A.~Anandkumar \emph{et~al.}, ``Pre-trained language models for interactive decision-making,'' \emph{Advances in Neural Information Processing Systems}, vol.~35, pp. 31\,199--31\,212, 2022.

\bibitem[Barbara and Maalouf(2025)]{barbara2025prompts}
M.~Barbara and A.~Maalouf, ``Prompts to summaries: Zero-shot language-guided video summarization,'' \emph{arXiv preprint arXiv:2506.10807}, 2025.

\bibitem[Huang et~al.(2023)Huang, Mees, Zeng, and Burgard]{huang2023audio}
C.~Huang, O.~Mees, A.~Zeng, and W.~Burgard, ``Audio visual language maps for robot navigation,'' \emph{arXiv preprint arXiv:2303.07522}, 2023.

\bibitem[Ding et~al.(2023)Ding, Yang, Xue, Zhang, Bai, and Qi]{ding2023pla}
R.~Ding, J.~Yang, C.~Xue, W.~Zhang, S.~Bai, and X.~Qi, ``Pla: Language-driven open-vocabulary 3d scene understanding,'' in \emph{Proceedings of the IEEE/CVF Conference on Computer Vision and Pattern Recognition}, 2023, pp. 7010--7019.

\bibitem[Peng et~al.(2023)Peng, Genova, Jiang, Tagliasacchi, Pollefeys, Funkhouser, et~al.]{peng2023openscene}
S.~Peng, K.~Genova, C.~Jiang, A.~Tagliasacchi, M.~Pollefeys, T.~Funkhouser \emph{et~al.}, ``Openscene: 3d scene understanding with open vocabularies,'' in \emph{Proceedings of the IEEE/CVF Conference on Computer Vision and Pattern Recognition}, 2023, pp. 815--824.

\bibitem[Chahine et~al.(2024)Chahine, Quach, Maalouf, Wang, and Rus]{chahine2024flexendtoendtextinstructedvisual}
\BIBentryALTinterwordspacing
M.~Chahine, A.~Quach, A.~Maalouf, T.-H. Wang, and D.~Rus, ``Flex: End-to-end text-instructed visual navigation with foundation models,'' 2024. [Online]. Available: \url{https://arxiv.org/abs/2410.13002}
\BIBentrySTDinterwordspacing

\bibitem[Maalouf et~al.(2023)Maalouf, Jadhav, Jatavallabhula, Chahine, Vogt, Wood, Torralba, and Rus]{maalouf2023follow}
A.~Maalouf, N.~Jadhav, K.~M. Jatavallabhula, M.~Chahine, D.~M. Vogt, R.~J. Wood, A.~Torralba, and D.~Rus, ``Follow anything: Open-set detection, tracking, and following in real-time,'' \emph{arXiv preprint arXiv:2308.05737}, 2023.

\bibitem[Liu et~al.(2023)Liu, Zeng, Ren, Li, Zhang, Yang, Li, Yang, Su, Zhu, et~al.]{liu2023grounding}
S.~Liu, Z.~Zeng, T.~Ren, F.~Li, H.~Zhang, J.~Yang, C.~Li, J.~Yang, H.~Su, J.~Zhu \emph{et~al.}, ``Grounding dino: Marrying dino with grounded pre-training for open-set object detection,'' \emph{arXiv preprint arXiv:2303.05499}, 2023.

\bibitem[Ghiasi et~al.(2022)Ghiasi, Gu, Cui, and Lin]{ghiasi2022scaling}
G.~Ghiasi, X.~Gu, Y.~Cui, and T.-Y. Lin, ``Scaling open-vocabulary image segmentation with image-level labels,'' in \emph{Computer Vision--ECCV 2022: 17th European Conference, Tel Aviv, Israel, October 23--27, 2022, Proceedings, Part XXXVI}.\hskip 1em plus 0.5em minus 0.4em\relax Springer, 2022, pp. 540--557.

\bibitem[Li et~al.(2022{\natexlab{c}})Li, Weinberger, Belongie, Koltun, and Ranftl]{li2022language}
B.~Li, K.~Q. Weinberger, S.~Belongie, V.~Koltun, and R.~Ranftl, ``Language-driven semantic segmentation,'' \emph{arXiv preprint arXiv:2201.03546}, 2022.

\bibitem[Chahine et~al.(2025)Chahine, Yang, Maalouf, Siriska, Jadhav, Vogt, Gil, Wood, and Rus]{chahine2025decentralized}
M.~Chahine, W.~Yang, A.~Maalouf, J.~Siriska, N.~Jadhav, D.~Vogt, S.~Gil, R.~Wood, and D.~Rus, ``Decentralized vision-based autonomous aerial wildlife monitoring,'' \emph{arXiv preprint arXiv:2508.15038}, 2025.

\bibitem[Ramesh et~al.(2021)Ramesh, Pavlov, Goh, Gray, Voss, Radford, Chen, and Sutskever]{ramesh2021zero}
A.~Ramesh, M.~Pavlov, G.~Goh, S.~Gray, C.~Voss, A.~Radford, M.~Chen, and I.~Sutskever, ``Zero-shot text-to-image generation,'' in \emph{International Conference on Machine Learning}.\hskip 1em plus 0.5em minus 0.4em\relax PMLR, 2021, pp. 8821--8831.

\bibitem[Crowson et~al.(2022)Crowson, Biderman, Kornis, Stander, Hallahan, Castricato, and Raff]{crowson2022vqgan}
K.~Crowson, S.~Biderman, D.~Kornis, D.~Stander, E.~Hallahan, L.~Castricato, and E.~Raff, ``Vqgan-clip: Open domain image generation and editing with natural language guidance,'' in \emph{European Conference on Computer Vision}.\hskip 1em plus 0.5em minus 0.4em\relax Springer, 2022, pp. 88--105.

\bibitem[Patashnik et~al.(2021)Patashnik, Wu, Shechtman, Cohen-Or, and Lischinski]{patashnik2021styleclip}
O.~Patashnik, Z.~Wu, E.~Shechtman, D.~Cohen-Or, and D.~Lischinski, ``Styleclip: Text-driven manipulation of stylegan imagery,'' in \emph{Proceedings of the IEEE/CVF International Conference on Computer Vision}, 2021, pp. 2085--2094.

\bibitem[Ramesh et~al.(2022)Ramesh, Dhariwal, Nichol, Chu, and Chen]{ramesh2022hierarchical}
A.~Ramesh, P.~Dhariwal, A.~Nichol, C.~Chu, and M.~Chen, ``Hierarchical text-conditional image generation with clip latents,'' \emph{arXiv preprint arXiv:2204.06125}, 2022.

\bibitem[Sreeram et~al.(2025)Sreeram, Wang, Maalouf, Rosman, Karaman, and Rus]{probing}
S.~Sreeram, T.-H. Wang, A.~Maalouf, G.~Rosman, S.~Karaman, and D.~Rus, ``Probing multimodal llms as world models for driving,'' \emph{IEEE Robotics and Automation Letters}, vol.~10, no.~11, pp. 11\,403--11\,410, 2025.

\bibitem[Kim et~al.(2019)Kim, Misu, Chen, Tawari, and Canny]{Kim2019-fw}
J.~Kim, T.~Misu, Y.-T. Chen, A.~Tawari, and J.~Canny, ``Grounding human-to-vehicle advice for self-driving vehicles,'' in \emph{Proceedings of the IEEE/CVF Conference on Computer Vision and Pattern Recognition (CVPR)}, Nov. 2019, pp. 10\,591--10\,599.

\bibitem[Omeiza et~al.(2021)Omeiza, Webb, Jirotka, and Kunze]{omeiza2021explanations}
D.~Omeiza, H.~Webb, M.~Jirotka, and L.~Kunze, ``Explanations in autonomous driving: A survey,'' \emph{IEEE Transactions on Intelligent Transportation Systems}, vol.~23, no.~8, pp. 10\,142--10\,162, 2021.

\bibitem[Kuo et~al.(2022)Kuo, Huang, Barbu, McGill, Katz, Leonard, and Rosman]{kuo2022trajectory}
Y.-L. Kuo, X.~Huang, A.~Barbu, S.~G. McGill, B.~Katz, J.~J. Leonard, and G.~Rosman, ``Trajectory prediction with linguistic representations,'' in \emph{2022 International Conference on Robotics and Automation (ICRA)}.\hskip 1em plus 0.5em minus 0.4em\relax IEEE, 2022, pp. 2868--2875.

\bibitem[Tan et~al.(2023)Tan, Ivanovic, Weng, Pavone, and Kraehenbuehl]{tan2023language}
S.~Tan, B.~Ivanovic, X.~Weng, M.~Pavone, and P.~Kraehenbuehl, ``Language conditioned traffic generation,'' \emph{arXiv preprint arXiv:2307.07947}, 2023.

\bibitem[Zhong et~al.(2023)Zhong, Rempe, Chen, Ivanovic, Cao, Xu, Pavone, and Ray]{zhong2023language}
Z.~Zhong, D.~Rempe, Y.~Chen, B.~Ivanovic, Y.~Cao, D.~Xu, M.~Pavone, and B.~Ray, ``Language-guided traffic simulation via scene-level diffusion,'' \emph{arXiv preprint arXiv:2306.06344}, 2023.

\bibitem[Caron et~al.(2021)Caron, Touvron, Misra, J{\'e}gou, Mairal, Bojanowski, and Joulin]{caron2021emerging}
M.~Caron, H.~Touvron, I.~Misra, H.~J{\'e}gou, J.~Mairal, P.~Bojanowski, and A.~Joulin, ``Emerging properties in self-supervised vision transformers,'' in \emph{Proceedings of the IEEE/CVF international conference on computer vision}, 2021, pp. 9650--9660.

\bibitem[Oquab et~al.(2023)Oquab, Darcet, Moutakanni, Vo, Szafraniec, Khalidov, Fernandez, Haziza, Massa, El-Nouby, et~al.]{oquab2023dinov2}
M.~Oquab, T.~Darcet, T.~Moutakanni, H.~Vo, M.~Szafraniec, V.~Khalidov, P.~Fernandez, D.~Haziza, F.~Massa, A.~El-Nouby \emph{et~al.}, ``Dinov2: Learning robust visual features without supervision,'' \emph{arXiv preprint arXiv:2304.07193}, 2023.

\bibitem[Kirillov et~al.(2023)Kirillov, Mintun, Ravi, Mao, Rolland, Gustafson, Xiao, Whitehead, Berg, Lo, et~al.]{kirillov2023segment}
A.~Kirillov, E.~Mintun, N.~Ravi, H.~Mao, C.~Rolland, L.~Gustafson, T.~Xiao, S.~Whitehead, A.~C. Berg, W.-Y. Lo \emph{et~al.}, ``Segment anything,'' \emph{arXiv preprint arXiv:2304.02643}, 2023.

\bibitem[Zhao et~al.(2023)Zhao, Ding, An, Du, Yu, Li, Tang, and Wang]{zhao2023fast}
X.~Zhao, W.~Ding, Y.~An, Y.~Du, T.~Yu, M.~Li, M.~Tang, and J.~Wang, ``Fast segment anything,'' 2023.

\bibitem[Cheng et~al.(2022)Cheng, Misra, Schwing, Kirillov, and Girdhar]{cheng2022masked}
B.~Cheng, I.~Misra, A.~G. Schwing, A.~Kirillov, and R.~Girdhar, ``Masked-attention mask transformer for universal image segmentation,'' in \emph{Proceedings of the IEEE/CVF Conference on Computer Vision and Pattern Recognition}, 2022, pp. 1290--1299.

\bibitem[Radford et~al.(2021)Radford, Kim, Hallacy, Ramesh, Goh, Agarwal, Sastry, Askell, Mishkin, Clark, et~al.]{radford2021learning}
A.~Radford, J.~W. Kim, C.~Hallacy, A.~Ramesh, G.~Goh, S.~Agarwal, G.~Sastry, A.~Askell, P.~Mishkin, J.~Clark \emph{et~al.}, ``Learning transferable visual models from natural language supervision,'' in \emph{International conference on machine learning}.\hskip 1em plus 0.5em minus 0.4em\relax PMLR, 2021, pp. 8748--8763.

\bibitem[Zhong et~al.(2022)Zhong, Yang, Zhang, Li, Codella, Li, Zhou, Dai, Yuan, Li, et~al.]{zhong2022regionclip}
Y.~Zhong, J.~Yang, P.~Zhang, C.~Li, N.~Codella, L.~H. Li, L.~Zhou, X.~Dai, L.~Yuan, Y.~Li \emph{et~al.}, ``Regionclip: Region-based language-image pretraining,'' in \emph{Proceedings of the IEEE/CVF Conference on Computer Vision and Pattern Recognition}, 2022, pp. 16\,793--16\,803.

\bibitem[Jatavallabhula et~al.(2023)Jatavallabhula, Kuwajerwala, Gu, Omama, Chen, Maalouf, Li, Iyer, Saryazdi, Keetha, et~al.]{jatavallabhula2023conceptfusion}
K.~M. Jatavallabhula, A.~Kuwajerwala, Q.~Gu, M.~Omama, T.~Chen, A.~Maalouf, S.~Li, G.~Iyer, S.~Saryazdi, N.~Keetha \emph{et~al.}, ``Conceptfusion: Open-set multimodal 3d mapping,'' \emph{arXiv preprint arXiv:2302.07241}, 2023.

\bibitem[Amir et~al.(2021)Amir, Gandelsman, Bagon, and Dekel]{amir2021deep}
S.~Amir, Y.~Gandelsman, S.~Bagon, and T.~Dekel, ``Deep {ViT} features as dense visual descriptors,'' \emph{arXiv preprint arXiv:2112.05814}, 2021.

\bibitem[Levine and Koltun(2013)]{levine2013guided}
S.~Levine and V.~Koltun, ``Guided policy search,'' in \emph{International conference on machine learning}.\hskip 1em plus 0.5em minus 0.4em\relax PMLR, 2013, pp. 1--9.

\bibitem[Dai et~al.(2015)Dai, He, and Sun]{Dai2015-az}
J.~Dai, K.~He, and J.~Sun, ``Convolutional feature masking for joint object and stuff segmentation,'' in \emph{2015 {IEEE} Conference on Computer Vision and Pattern Recognition ({CVPR})}.\hskip 1em plus 0.5em minus 0.4em\relax IEEE, Jun. 2015.

\end{thebibliography}
}

\clearpage
\appendix

\section{Appendix}

\subsection{PCA and correlation analysis: definitions and rationale}
\label{app:pca_corr_details}

\noindent\textbf{Feature covariance.}
Let $F_c$ be an $N \times D$ matrix of the $N$ computed patch descriptors. Assume $F_c$ is centered (otherwise do so). Then compute $\Sigma = \frac{1}{N-1} F_c^\top F_c \in \REAL^{d \times d}$. The total variance equals the trace of $\Sigma$, which is the sum of eigenvalues. The cumulative explained variance curve reports $\sum_{i=1}^m \lambda_i \big/ \sum_{i=1}^d \lambda_i$.

\noindent\textbf{Top-energy subset.}
Let $f_j$ denote the computed descriptor for patch $j$. Compute $\lVert f_i \rVert_2$ for each patch, sort, and keep the top $128$ patches. Recompute the PCA curve on this subset. Achieving $90\%$ variance with about $14$ components indicates redundancy even among the strongest tokens.

\noindent\textbf{Why feature-space PCA supports patch masking.}
Although SPS masks patches rather than feature channels, a low-rank feature manifold implies that many patches project into the same few directions. Removing some rows of $F$ preserves the principal subspace with high probability, so semantics remain available from the retained tokens. One can also analyze patch-wise covariance $\frac{1}{d-1} F_c F_c^\top \in \REAL^{N \times N}$, which shows many correlated patches, consistent with the qualitative overlays in Appx.~\ref{app:sim_overlays}.

\medskip
\noindent\textbf{Second-moment viewpoint.}
The matrix $F_c^\top F_c$ is the second-moment (Gram) matrix of descriptors.
When we uniformly sample $m$ rows without replacement to form $SF_c$, the expected Gram matrix of the sampled descriptors satisfies
$\mathbb{E}\big[(SF_c)^\top(SF_c)\big] = \tfrac{m}{N}\,F_c^\top F_c$.
Thus a quadratic form induced by the sampled rows preserves the full quadratic form in expectation up to the known scale $m/N$.

\subsection{Full grid similarity overlays}
\label{app:sim_overlays}

\begin{figure*}[t]
    \centering
    \includegraphics[width=0.98\textwidth]{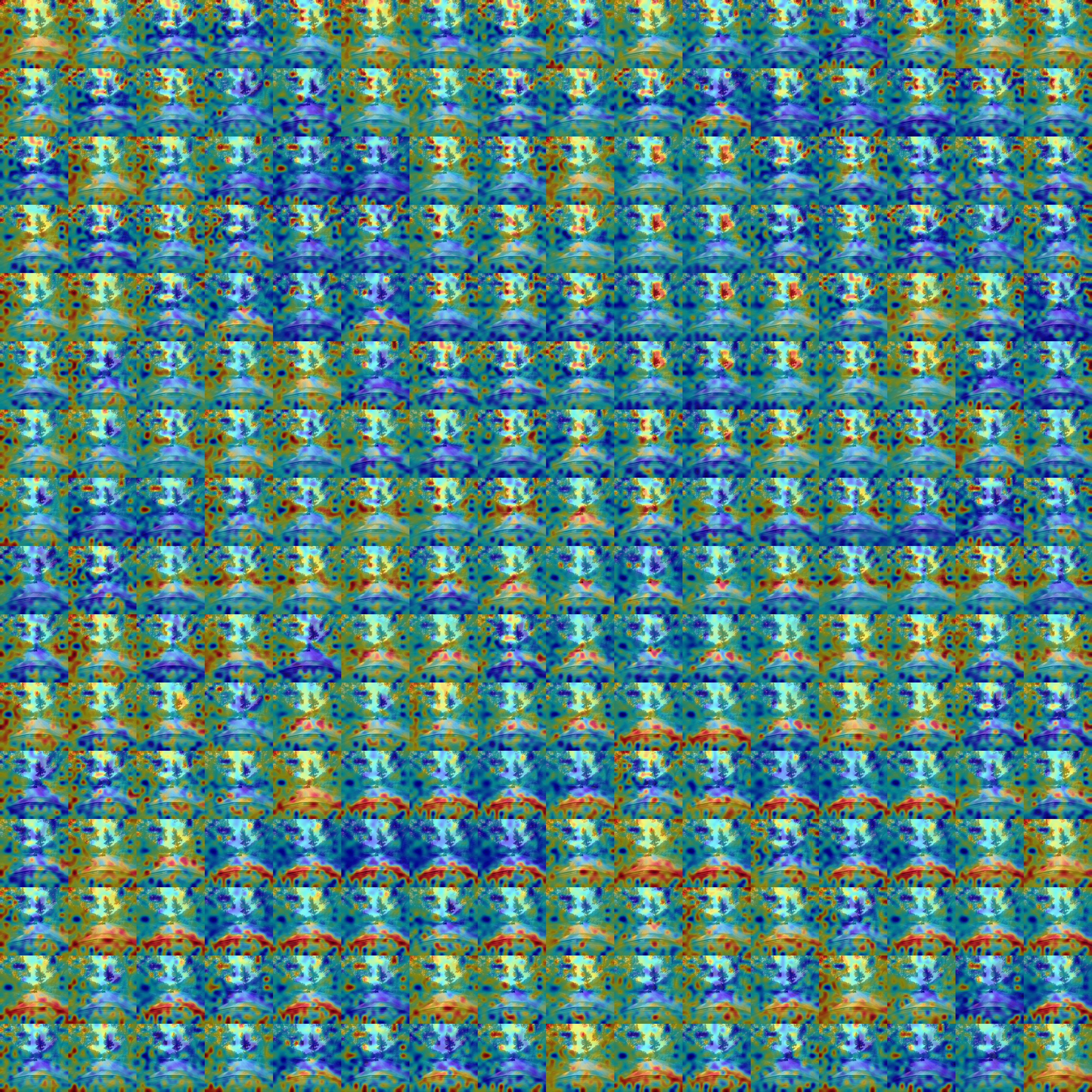}
    \caption{Full grid of similarity overlays for all $16 \times 16$ patches. Each cell shows the raw image overlaid with the cosine similarity from the corresponding seed patch to all other patches.}
    \label{fig:grid_overlay}
\end{figure*}

\noindent\textbf{How to compute it?} Given $F_c \in \REAL^{N \times d}$ be the matrix of patch features as before, and let $f_j$ denote the computed descriptor for patch $j$. Choose a seed index/patch $i$ and define $s_j = \frac{f_i \cdot f_j}{\lVert f_i \rVert \lVert f_j \rVert}$ for every patch $j$.  Set $s= (s_1, \cdots, s_N)$, and reshape $s$ to form $H' \times W'$ matrix, then upsample the matrix to the input resolution, and normalize to $[0,1]$ and overlay with the original input frame $F$.

\noindent \textit{Interpretation of Fig.~\ref{fig:grid_overlay}.} Blue denotes low or negative cosine similarity and red denotes high positive similarity. For a given seed patch, only a sparse set of regions are strongly similar (red), while most are weakly or negatively related (blue). As the seed changes across the grid, the red regions shift to locations that share semantics with the new seed, illustrating globally entangled yet redundant representations.

\subsection{DINO backbone results on Animal OOD scenarios}
\label{app:dino_animal}
We also evaluate SPS with the DINO backbone (to show generalization across different setups/backbones) and observe consistent gains (Table~\ref{tab:dino_animal}).

\begin{table}[t]
\centering
\caption{\textbf{SPS with a DINO backbone on Animal OOD scenarios.} These actors were not present in the in-distribution training set. SPS improves DINO in $6$ of $7$ scenarios and ties in one.}
\begin{tabular}{@{}lccc@{}}
\toprule
\textbf{Scenario} & \textbf{DINO} & \textbf{DINO + SPS} & \textbf{Gain} \\
\midrule
RSpDDA & 0.89 & 0.96 & $+7\%$ \\
RSuDNA & 0.59 & 0.65 & $+6\%$ \\
RFDDA  & 0.95 & 0.97 & $+2\%$ \\
RWSDA  & 0.87 & 0.92 & $+5\%$ \\
USuDDA & 0.80 & 0.82 & $+2\%$ \\
USuRDA & 0.83 & 0.83 & $0\%$ \\
USuDNA & 0.86 & 0.87 & $+1\%$ \\
\midrule
\textbf{Average} & $82.7\%$ & $86\%$ & \textbf{+3.3\%} \\
\bottomrule
\end{tabular}
\label{tab:dino_animal}
\end{table}



\subsection{Proof of Lemma~\ref{lemma:1}}
\label{app:pca_corr_proof}

\begin{proof}[Proof of Lemma~\ref{lemma:1}]
\textbf{Step 0: Thin SVD, projectors, and coherence.}
Let the thin SVD of the centered descriptor matrix be
$
F_c = U_r \Sigma_r V_r^\top
$,
with $U_r\in\mathbb{R}^{N\times r}$ and $V_r\in\mathbb{R}^{d\times r}$ having orthonormal columns, and $\Sigma_r\in\mathbb{R}^{r\times r}$ diagonal with positive entries.
The projector onto the row space is $\Pi_F = V_r V_r^\top$.
Define the uniform row coherence
$
\mu \triangleq \frac{N}{r}\,\max_{i\in[N]} \|(U_r)_{i:}\|_2^2 \in [1,\,N/r].
$
Fix tolerances $\varepsilon\in(0,1)$ and $\delta\in(0,1)$.

\medskip
\noindent
\textbf{Step 1: Second-moment identity in expectation.}
Let $S\in\{0,1\}^{m\times N}$ select $m$ rows uniformly without replacement, and write $SF_c$ for the sampled submatrix.
Then $S^\top S=\mathrm{diag}(c_1,\dots,c_N)$ with $c_i\in\{0,1\}$ indicating whether row $i$ is selected, so
$\mathbb{E}[S^\top S]=\tfrac{m}{N}I$ and
\begin{equation}
\mathbb{E}\big[(SF_c)^\top(SF_c)\big] = F_c^\top \mathbb{E}[S^\top S] F_c = \tfrac{m}{N}\,F_c^\top F_c.
\end{equation}

Hence the second-moment (Gram) matrix of sampled descriptors preserves the full one in expectation up to the factor $m/N$.

\medskip
\noindent
\textbf{Step 2: Subspace embedding via uniform row sampling.}
By matrix Chernoff concentration with bounded coherence (standard subspace-embedding results), if
\[
m \;\ge\; C\,\frac{\mu\,r}{\varepsilon^2}\,\log\!\Bigl(\frac{r}{\delta}\Bigr),
\]

then with probability at least $1-\delta$ we have the spectral sandwich
\begin{equation}\label{eq:sandwich}
(1-\varepsilon)\,I_r \;\preceq\; \frac{N}{m}\,U_r^\top S^\top S\,U_r \;\preceq\; (1+\varepsilon)\,I_r.
\end{equation}
In particular, $U_r^\top S^\top S\,U_r$ is positive definite, so $S U_r$ has full column rank $r$.
By \eqref{eq:sandwich}, all eigenvalues of $\tfrac{N}{m}\,U_r^\top S^\top S\,U_r$ lie in $[\,1-\varepsilon,\,1+\varepsilon\,]$, 
hence $U_r^\top S^\top S\,U_r \succ 0$. 
Positive definiteness implies that for any $x\neq 0$,
$x^\top U_r^\top S^\top S\,U_r x > 0$; therefore $SU_r x \neq 0$ and $\mathrm{rank}(SU_r)=r$. 
This means the sampled rows still span an $r$-dimensional subspace in $\mathbb{R}^N$.

\medskip
\noindent
\textbf{Step 3: Consequences for the row space.}
Using $F_c = U_r \Sigma_r V_r^\top$,
\begin{equation}
\begin{aligned}
(SF_c)^\top(SF_c) \;&=\; V_r \,\Sigma_r \,\bigl(U_r^\top S^\top S\,U_r\bigr)\,\Sigma_r \,V_r^\top \\
                  \;&=\; V_r\bigl[\Sigma_r (U_r^\top S^\top S\,U_r)\Sigma_r\bigr]V_r^\top.
\end{aligned}
\end{equation}

Since $U_r^\top S^\top S\,U_r$ is $r\times r$ positive definite, 
the nonzero eigenvectors of $(SF_c)^\top(SF_c)$ are the columns of $V_r$, and the corresponding eigenvalues are those of $\Sigma_r (U_r^\top S^\top S\,U_r)\Sigma_r$.
Thus $(SF_c)^\top(SF_c)$ has the same nonzero eigenvectors as $V_r$, whose columns span $\mathrm{span}(V_r)$.
Consequently, the range of $(SF_c)^\top(SF_c)$ equals $\mathrm{span}(V_r)$, so the row space of $SF_c$ coincides with that of $F_c$, and the orthogonal projectors satisfy $\Pi_{SF_c}=\Pi_F$.
This is stronger than the bound $\|\Pi_F - \Pi_{SF_c}\|_2 \le \varepsilon$, which follows immediately.

\medskip
\noindent

Combining Steps 1–3 proves the lemma under the stated sample complexity.
\qedhere
\end{proof}

\paragraph*{Remark 1 (Lipschitz stability).}
Let $\phi$ be a row-wise Lipschitz map with constant $L$ (for example, a per-token linear map followed by a 1-Lipschitz normalization). Let $\phi$ act independently on each row. Then
\begin{equation}
\begin{aligned}
\|\phi(SF_c) - \phi(F_c)\|_F^2 &= \sum_{i=1}^N \|\phi((SF_c)_{i:}) - \phi((F_c)_{i:})\|_2^2 \\
                               &\le \sum_{i=1}^N L^2 \|(SF_c)_{i:} - (F_c)_{i:}\|_2^2 \\
                               &= L^2 \|SF_c - F_c\|_F^2,
\end{aligned}
\end{equation}

so $\|\phi(SF_c) - \phi(F_c)\|_F \le L\,\|SF_c - F_c\|_F$.
Standard bounded-difference or matrix Bernstein arguments for sampling without replacement yield concentration of this deviation as $m$ grows.

\medskip
\noindent

\paragraph*{Remark 2 (Unbiased quadratic forms and covariance).}
For any per-token linear map $\phi(X)=XW$,
\begin{equation}
\mathbb{E}\big[((SF_c)W)^\top ((SF_c)W)\big]
= \tfrac{m}{N}\,(F_cW)^\top(F_cW).
\end{equation}

Thus quadratic objectives and covariances computed from the sampled rows preserve the full counterparts in expectation up to a known scale $m/N$.
One can reweight by $N/m$, or rely on normalization layers to absorb the scale.

\emph{Covariance remark.}
Let $\widehat{\Sigma}_m \triangleq \tfrac{1}{m-1}(SF_c)^\top(SF_c)$ be the usual unbiased sample covariance built from the sampled rows, and $\widehat{\Sigma}_N \triangleq \tfrac{1}{N-1}F_c^\top F_c$ the full-sample covariance.
Then
\begin{equation}
\begin{aligned}
\mathbb{E}\big[\widehat{\Sigma}_m\big] &= \frac{1}{m-1}\,\mathbb{E}\big[(SF_c)^\top(SF_c)\big] = \frac{m}{N(m-1)}\,F_c^\top F_c \\
                                       &= \frac{N-1}{N}\cdot\frac{m}{m-1}\;\widehat{\Sigma}_N.
\end{aligned}
\end{equation}

The factor $\alpha \triangleq \tfrac{N-1}{N}\cdot\tfrac{m}{m-1}$ is close to $1$ whenever $m$ and $N$ are moderate to large.
For $m=N/2$, $\alpha = \tfrac{N-1}{N}\cdot \tfrac{N}{N-2} = \tfrac{N-1}{\,N-2\,} = 1 + O(1/N)$.
Hence the covariance is preserved in expectation up to a negligible finite-sample correction.
In practice, either explicit reweighting ($N/m$) or standard normalization layers absorb any global scale, and the subspace preservation of Steps~2–3 is unaffected by scale.


\subsection{Self Attensin Causes the Patch Redundancy}
\noindent\textbf{Self-attention mixing.}Given a patch embedding matrix $X \in \REAL^{N \times D}$, a transformer layer forms $Q = X W^Q$, $K = X W^K$, and $V = X W^V$ with learned projections.
The attention logits are $G = Q K^\top$, and the token update is
\begin{equation}\label{eq:self_attention_update}
Y \;=\; \softmax(G)\,V \quad\in\; \REAL^{N \times D}.
\end{equation}
Hence, each output token $Y_i$ is a convex combination of all value vectors $V_j$ that are weighted by content similarity through the softmax of $Q_i K_j^\top$. This induces global mixing of information across tokens, so the descriptor at each spatial location carries scene-wide context.

\end{document}